%% file: main.tex
\documentclass[11pt]{article}

\usepackage[margin=1in]{geometry}

\usepackage[utf8]{inputenc}
\usepackage[T1]{fontenc}
\usepackage[authoryear, round]{natbib}
\usepackage{hyperref}
\usepackage{url}
\usepackage{authblk}
\usepackage{booktabs}
\usepackage{amsfonts}
\usepackage{amsmath}
\usepackage{amssymb}
\usepackage{amsthm}
\usepackage{nicefrac}
\usepackage{microtype}
\usepackage{xcolor}

\usepackage{graphicx}
\usepackage{algorithm}
\usepackage{algpseudocode}
\usepackage{xfrac}
\usepackage{paracol}
\usepackage{caption}
\usepackage{subcaption}
\usepackage{multirow}
\usepackage{mathtools}
\usepackage{tikz}
\usepackage{placeins}

\usepackage{arydshln}

\usepackage[capitalize]{cleveref}

\DeclareCaptionSubType{algorithm}

\usepackage{macros}
\usepackage{enumitem}

\newtheorem{lemma}{Lemma}

\newcommand\vsp{{\vspace*{-0.1cm}}}
\title{\vspace{-1.5cm} \textbf{ \Huge Beyond MMSE: Enhancing PnP Restoration with ProxiMAP}}

\author[1,*]{Kenta Vert}
\author[1,2,*]{Giacomo Meanti}
\author[1]{\\ Scott Pesme}
\author[1]{Michael Arbel}
\author[1]{Julien Mairal}

\affil[ ]{\texttt{\{firstname.lastname\}@inria.fr}}
\affil[*]{Equal contribution}
\affil[1]{Univ. Grenoble Alpes, Inria, CNRS, Grenoble INP, LJK}
\affil[2]{MaLGa Centre, DIBRIS, Universit\`a di Genova, MMS, Italian Institute of Technology, Genova, Italy}

\date{}
\date{}

\begin{document}

\maketitle

\begin{abstract}
Plug-and-Play (PnP) methods have become standard tools for solving imaging inverse problems by replacing the intractable maximum a posteriori (MAP) denoiser with the MMSE one. While this mismatch has been widely treated as unavoidable, recent works have sought to close this gap by targeting the MAP with diffusion-model scores. We show this is problematic in practice: learned scores do not match the true ones, so MAP-targeting iterations converge to cartoon-like images rather than realistic ones, and better results are obtained by stopping short of convergence.
We turn this observation into a design principle and introduce \trueprox/, an iterative MAP approximation whose noise schedule keeps the iterate's residual noise matched to the denoiser's training noise. This keeps the denoiser in-distribution where its score is reliable, and yields implicit early stopping that avoids the failure mode above.
\trueprox/ is a modular drop-in replacement for MMSE denoisers in standard PnP algorithms and consistently sharpens reconstructions across deblurring, inpainting, super-resolution, and phase retrieval. Building on the same principle, we propose a hybrid variant that applies \trueprox/ only in the late iterations of PnP, where the denoiser is most reliable---matching or exceeding the full-replacement variant at a fraction of the cost.
\end{abstract}

\section{Introduction}\label{sec:intro}
\input{sec/intro.tex}

\section{Related Work}\label{sec:related}
\input{sec/related.tex}

\section{The \trueprox/ Algorithm}\label{sec:algorithm}
\input{sec/algorithm.tex}

\section{Experiments}\label{sec:experiments}
\input{sec/experiments.tex}

\section{Conclusion}
PnP methods replace the MAP denoiser their variational formulation prescribes with an MMSE denoiser, producing the well-known oversmoothing of posterior-mean estimators. We showed that simply targeting the MAP with diffusion-model scores does not resolve this: with learned scores, MAP-targeting iterations drift toward unrealistic regions of the learned distribution, and a controlled comparison with exact GMM scores confirms that this is a property of the learned score, not of MAP estimation itself. Our \trueprox/ algorithm addresses this issue: Used as a drop-in replacement for MMSE denoisers, it consistently sharpens reconstructions across multiple inverse problems, datasets, and noise levels. A hybrid variant that applies \trueprox/ only in the late iterations of PnP matches or exceeds the full-replacement variant at a fraction of the cost.

\paragraph{Broader impact.}
\trueprox/ improves the perceptual quality of reconstructions produced by Plug-and-Play methods across a range of inverse problems. Because the underlying diffusion models are pretrained on natural-image datasets, reconstructions inherit the biases and content distribution of those datasets; this is a property of the diffusion priors used rather than of \trueprox/ itself. More fundamentally, our analysis (\cref{sec:diagnosis}) shows that diffusion-based MAP estimators recover what is most likely under the \emph{learned} distribution rather than under the data distribution. This caveat is particularly relevant for safety-critical applications such as medical or scientific imaging, where downstream decisions should not rely on visual plausibility alone.

\section*{Acknowledgements}
This work was supported by the ERC grant number 101087696 (APHELAIA project) and by the ANR project BONSAI (grant ANR-23-CE23-0012-01).

\newpage

\bibliographystyle{plainnat}
\bibliography{main}

\newpage
\appendix
\input{sec/appendix.tex}

\newpage
\FloatBarrier

\end{document}

%% file: sec/intro.tex
Inverse problems are ubiquitous in imaging, sharing the mathematical structure $y = \cA(x) + \epsilon$, where $x$ is the unknown clean image, $\cA$ a known degradation operator (\emph{e.g.}, blur, subsampling), and $\epsilon$ measurement noise. This common structure has motivated general-purpose frameworks such as Plug-and-Play (PnP)~\citep{plugandplay,chan2016plug} and Regularization by Denoising (RED)~\citep{romano17red}, which combine optimization with powerful image priors.

Adopting a probabilistic perspective, recovery of $x$ from $y$ is formulated as posterior maximization:
\begin{equation}\label{eq:map}
x^* = \argmax_x p(x\mid y)
= \argmax_x \log p(y\mid x) + \log p(x),
\end{equation}
or, equivalently, as $x^* = \argmin_x f(x, y) + \lambda g(x)$, with $f$ a data-fidelity term, $g$ a regularizer reflecting prior knowledge, and $\lambda>0$ balancing the two. 
Such composite problems, when $f$ is strongly convex but $g$ is not, can be solved with proximal splitting methods such as ADMM and HQS~\citep{combettes11,parikh14proximal} which rely on the proximal operator
\begin{equation}\label{eq:proxg}
\prox_{\tau g}(z) := \argmin_x \frac{1}{2}\|x - z\|^2 + \tau g(x).
\end{equation}
The key insight underlying PnP is that when $g = -\log p$, the proximal operator~\eqref{eq:proxg} is the maximum a posteriori (MAP) denoiser of a Gaussian denoising problem. 
This motivates replacing $g$ with a denoiser $D_\theta$ which can be a deep neural network trained on large natural image datasets~\citep{zhang2017learning} in order to capture an expressive prior. 
In practice, however, $D_\theta$ is trained with a squared-error loss $\mathbb{E}_{x,\epsilon} \| D_\theta(x + \sigma \epsilon) - x \|^2$, and the trained model will approximate the MMSE estimator $\mathbb{E}[X \mid Y = x + \sigma \epsilon]$ rather than the MAP that~\eqref{eq:map} actually requires.  The practical consequence is the well-known \textit{regression-to-the-mean} oversmoothing of MMSE-based reconstructions.

\vspace{-.8em}
\paragraph{Why MAP-targeting fails, and what to do instead.}
A natural way to close this gap is to estimate the MAP denoiser directly. 
\citet{laumont2023maximum} and \citet{pesme2025map} have shown convergence properties of a smoothed version of gradient descent on the proximal objective, using scores $\nabla_x \log p_\sigma(x)$ and annealing $\sigma\to 0$. %
With \emph{exact} scores and log-concave $p$, this procedure recovers the true MAP. In Section~\ref{sec:diagnosis}, we use a Gaussian mixture model, where the score is available in closed form, to verify that the recovery is clean. 
However, when the scores are learned from data, the procedure converges to a different object: the MAP of the learned distribution, whose high-density regions, due to the implicit biases of diffusion training, do not coincide with those of the true data distribution. The resulting images are cartoon-like (also reported by~\citet{karczewski25diffusion}), and better reconstructions are obtained by stopping short of convergence.
We turn this observation into a design principle for MAP approximation under inexact scores: at every step, the iterate's residual noise level should match the noise level at which the denoiser is queried. 
A learned denoiser is reliable only on inputs whose noise statistics match its training distribution; outside that regime its score estimate degrades, which leads to the failure mode above. 
We introduce \trueprox/, an iterative MAP approximation that follows this principle. 
The matching constraint determines the noise schedule, keeps the denoiser in-distribution throughout, and yields implicit early stopping preventing drift into the failure regime. The result is sharper than the MMSE estimator, as illustrated in \cref{fig:map-v-mmse}.

\begin{figure}[t]
    \centering
    \includegraphics[width=\textwidth]{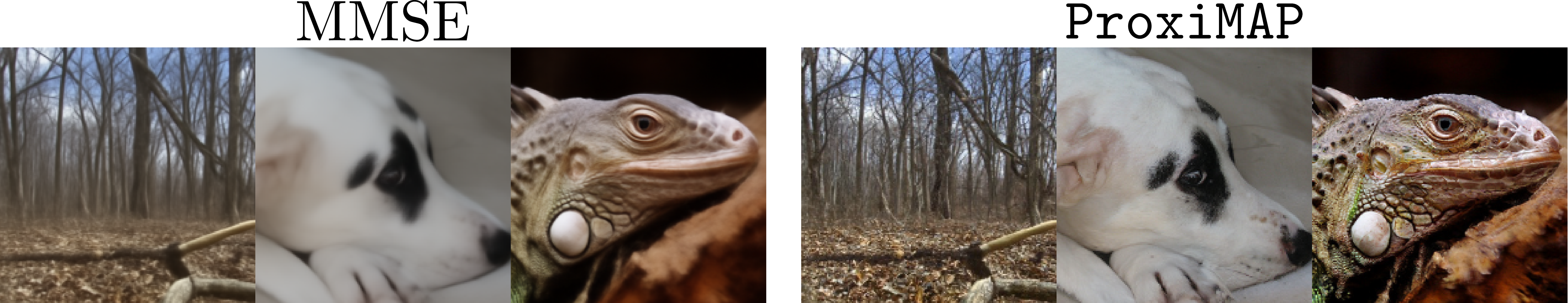}
    \caption{Conditional MMSE estimates (left) versus \trueprox/ (right) on three crops at noise level $\sigma_y = 0.2$. \trueprox/ recovers fine textures and sharp edges that the MMSE estimator smooths away. We propose to leverage this improvement for PnP image reconstruction.}
    \label{fig:map-v-mmse}
\end{figure}

\trueprox/ is a modular drop-in replacement for MMSE denoisers in standard PnP algorithms. We integrate it into three state-of-the-art PnP algorithms (DPIR, DiffPIR, DAPS) and show consistent LPIPS improvements with competitive PSNR across deblurring, inpainting, super-resolution, phase retrieval, and HDR reconstruction, on both FFHQ and ImageNet. 
Pushing the matching principle further, we propose a hybrid variant that uses MMSE denoisers early in PnP and switches to \trueprox/ only at the end -- motivated by the observation that PnP iterates enter the regime of reliable score information only in their later iterations. 
The hybrid recovers most of the gains of the full-replacement variant at a fraction of the cost -- and in several cases improves on it. 

In summary, our main contributions are:
\begin{itemize}[leftmargin=*, itemsep=0pt, topsep=0pt, parsep=0pt]
    \item \textbf{A diagnosis of MAP estimation under learned priors.} Via a controlled comparison between a diffusion model and a Gaussian mixture model where exact scores are available, we show that the cartoon-like artifacts produced by na\"ive MAP iteration stem from score approximation error rather than from MAP estimation itself. As a consequence, better reconstructions are obtained by stopping short of convergence.
    \item \textbf{An effective algorithm derived from a single principle.} \trueprox/ is an iterative MAP approximation whose noise and step-size schedules follow from matching the iterate's residual noise to the denoiser's training noise at every step. This keeps the denoiser in-distribution and yields implicit early stopping, providing better approximation under inexact scores.
    \item \textbf{Systematic improvements over PnP baselines} across multiple algorithms, tasks, datasets, and noise levels, established via Bayesian hyperparameter optimization to ensure fair comparison.
    \item \textbf{An efficient and practical hybrid variant} that uses \trueprox/ only in the late iterations of PnP, matching or exceeding full-replacement \trueprox/ at substantially reduced cost.
\end{itemize}

%% file: sec/related.tex
\vsp
\noindent
\textbf{Plug-and-play (PnP) methods.} PnP methods~\citep{plugandplay} replace the proximal operator in iterative optimisation algorithms with a generic denoiser~$D_\theta(y)$, typically parameterised by the noise level $\sigma$. This decoupling of the data-fidelity term from the prior allows to leverage off-the-shelf denoisers as implicit image priors. A wide variety of denoisers have been used within this framework, including classical approaches such as BM3D~\citep{bm3d}, convolutional neural networks~\citep{meinhardt17,zhang2021plug,zhang2017learning}, and, more recently, diffusion models~\citep{graikos2022diffusion,diffpir} as well as flow matching models~\citep{pnpflow}. Different proximal optimisation schemes can be used, such as PGD~\citep{terris20}, ADMM~\citep{romano17red}, and HQS~\citep{zhang2017learning}. As the PnP literature is vast, we refer the reader to~\citet{hurault2023convergent,kamilov23pnp} for a comprehensive overview. Convergence of PnP algorithms has been established under various assumptions on the denoiser~\citep{sreehari16,gavaskar20,nair2021fixed,xu2020provable,hurault2022proximal}. However, standard deep denoisers approximate the MMSE estimator rather than the proximal operator of the true prior, so the recovered solution cannot in general be interpreted as a MAP estimate. 

\noindent
\textbf{MAP estimation with learned priors.} Several recent works have recognized the limitations of MMSE-based denoisers and proposed to estimate the proximal operator $\prox_{-\tau \ln p}$ or, equivalently, the MAP denoiser, directly. \citet{fang2023s} replace the standard MSE objective with a regularized proximal matching loss whose minimizer coincides, as the regularization parameter tends to zero, with the proximal operator. \citet{laumont2023maximum} introduce PnP-SGD, which performs stochastic gradient descent on a smoothed version of the proximal objective; by keeping the smoothing parameter fixed, their method approximates the proximal operator of the smoothed density rather than the true one. To overcome this limitation, \citet{pesme2025map} leverage diffusion-model score estimates $\nabla \ln p_{\sigma_k}$ across multiple noise levels and analyze a smoothed gradient descent algorithm with annealed $\sigma_k$, proving convergence to the true proximal operator under a log-concavity assumption.

A parallel line of work formulates inverse problems as MAP estimation directly within the diffusion-sampling framework. \citet{gutha2024map} cast the reverse conditional generation as optimization of a tractable MAP objective, and develop algorithms for inpainting; \citet{gutha2025vml} build on this with a variational mode-seeking loss whose per-step minimization guides samples toward the MAP estimate. \citet{lmaps} propose Local MAP Sampling, a covariance approximation that unifies Tweedie-moment-projected diffusion~\citep{boys2024} with optimization-based methods. These works produce end-to-end inverse problem solvers and do not address the problem we consider: enhancing existing PnP frameworks via a modular replacement of their inner denoising step.

\noindent
\textbf{Sampling-based methods.} An alternative to optimization-based PnP methods is to sample from the posterior $p(x \mid y)$ rather than seek its maximum. A prominent class are \emph{conditional diffusion models}, which modify the unconditional score $\nabla \ln p_\sigma(x)$ into the posterior score $\nabla \ln p_\sigma(x \mid y)$, then sample via the reverse diffusion SDE. Various approximations make this tractable: \citet{dhariwal2021diffusion} use a pretrained classifier to estimate $\nabla \ln p(y \mid x)$; \citet{jalal21} approximate $p_\sigma(y \mid x) \approx p(y \mid x)$ via a Gaussian likelihood; DPS~\citep{dps} approximates the posterior mean using Tweedie's formula; and \citet{boys2024} further approximate the posterior standard deviation. Similarly, \citet{kadkhodaie2020solving} draw high-density samples by annealing the noise level and taking gradient steps. All these methods target the posterior distribution rather than its mode. Relatedly, the InDI algorithm of \citet{delbracio23inversion} is an iterative restoration method linked to the probability-flow ODE via Tweedie's formula.

%% file: sec/algorithm.tex
We begin by showing that with learned diffusion models the MAP solution for the denoising problem drifts towards unrealistic regions in image-space. In \cref{sec:proximap-design} we derive \trueprox/ as a way of avoiding such drift %
and in \cref{sec:proximap-pnp}, we discuss \trueprox/ in the context of PnP frameworks.

\subsection{MAP-targeting fails with learned scores}\label{sec:diagnosis}
Once again, we wish to compute the proximal operator of a noisy image $y = x + \sigma_y\epsilon$, $\epsilon\sim\cN(0, I)$
\begin{equation}\label{eq:prox2}
\prox_{-\tau\log p}(y) := \argmin_x \frac{1}{2}\|x - y\|^2 - \tau \log p(x),
\end{equation}
where $p(x)$ is the unknown density of natural images and $\tau$ controls regularization strength. With access to the score $\nabla\log p(x)$ one could solve~\eqref{eq:prox2} by gradient descent. In practice we only have approximate scores $\nabla\log p_\sigma(x)$ of the smoothed densities $p_\sigma = p \ast \cN(0, \sigma^2 I)$, obtained from a pretrained diffusion model. %
As $\sigma \to 0$, the smoothed scores approach the true $\nabla\log p$, but the objective becomes increasingly non-smooth and gradient descent at fixed small $\sigma$ is unstable. \citet{pesme2025map} proposed to solve \cref{eq:prox2} using smoothed gradient descent with decreasing $\sigma_k$:
\begin{align}
    x_{k+1} &= x_k - \gamma_k \nabla_x \bigg( \frac{1}{2}\norm{x_k - y}^2 - \tau \log p_{\sigma_k}(x_k) \bigg),  \label{eq:gd-general2}
\end{align}
and proved convergence to the MAP under log-concavity of $p$ and with the following schedule:
\begin{equation}\label{eq:original-algo}
    \sigma_k^2 = \frac{\tau}{k+1},~\gamma_k = \frac{1}{k+2} \implies x_{k+1} = \frac{y}{k+2} + \frac{k+1}{k+2}\bE[X\mid X_{\sigma_k} = x_k].
\end{equation}
Although the assumption does not hold for natural images, \cref{eq:original-algo} is a direct and theoretically grounded MAP-targeting iteration, and we use it as the diagnostic vehicle here. 
We show below that replacing the exact MMSE (not available in practice) by a score-based approximation yields a failure mode at convergence.

\textbf{The cartoon effect.~~}
We instantiate~\eqref{eq:original-algo} with three pretrained denoisers: a flow-matching model on MNIST and unconditional diffusion models on FFHQ and ImageNet. Across all three (\cref{fig:cartoons}), the resulting iterates do not resemble typical image samples. On MNIST the MAP estimates progressively thicken into blob-like shapes as $\sigma_y$ increases, in contrast to the overly smooth MMSE estimates. On FFHQ and ImageNet, iterates exhibit the \emph{cartoon} appearance recently reported by~\citet{karczewski25diffusion} for pretrained diffusion models: flat colors, exaggerated edges, and unrealistic textures.

\begin{figure}
    \centering
    \includegraphics[width=0.45\linewidth]{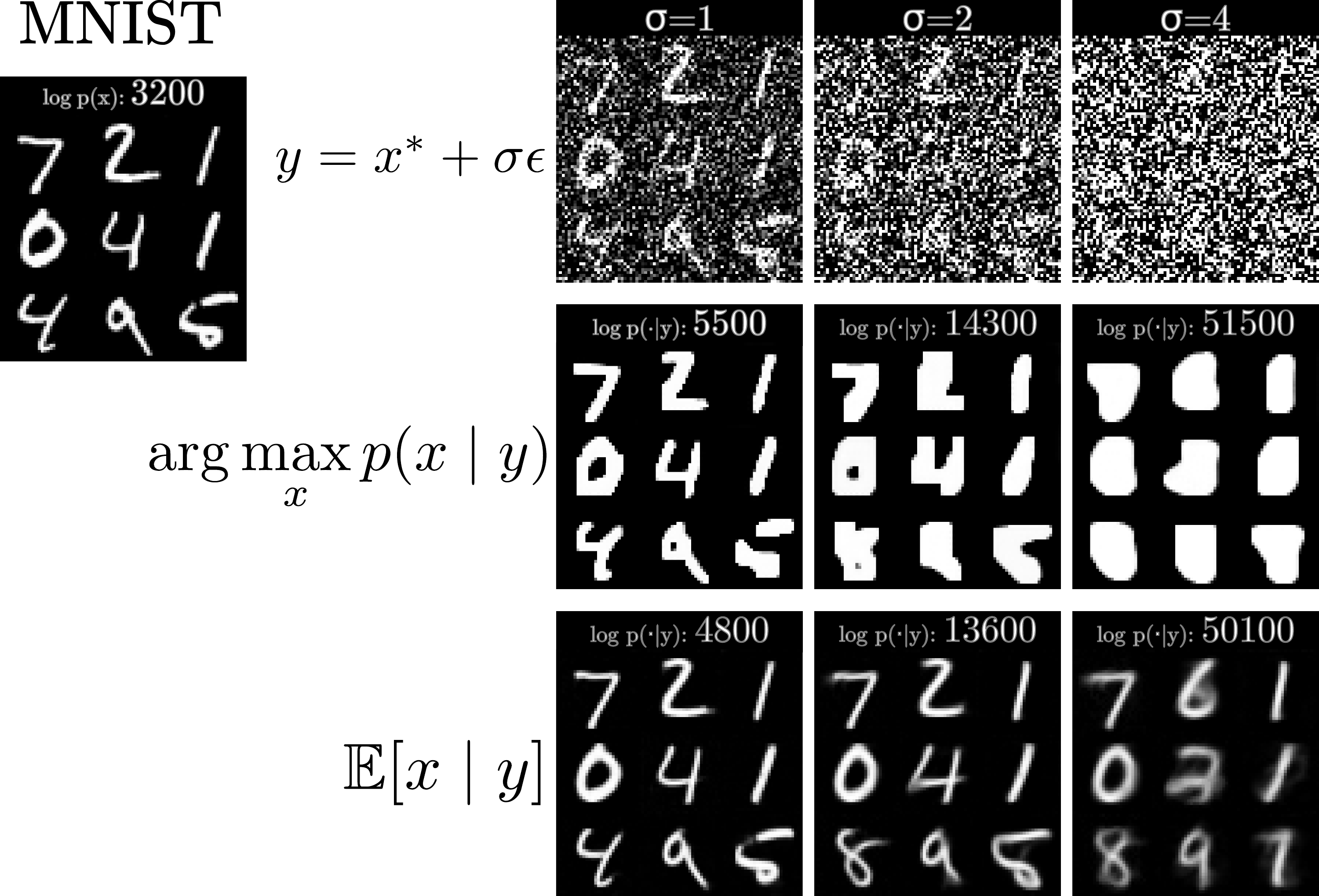}
    \includegraphics[width=0.49\linewidth]{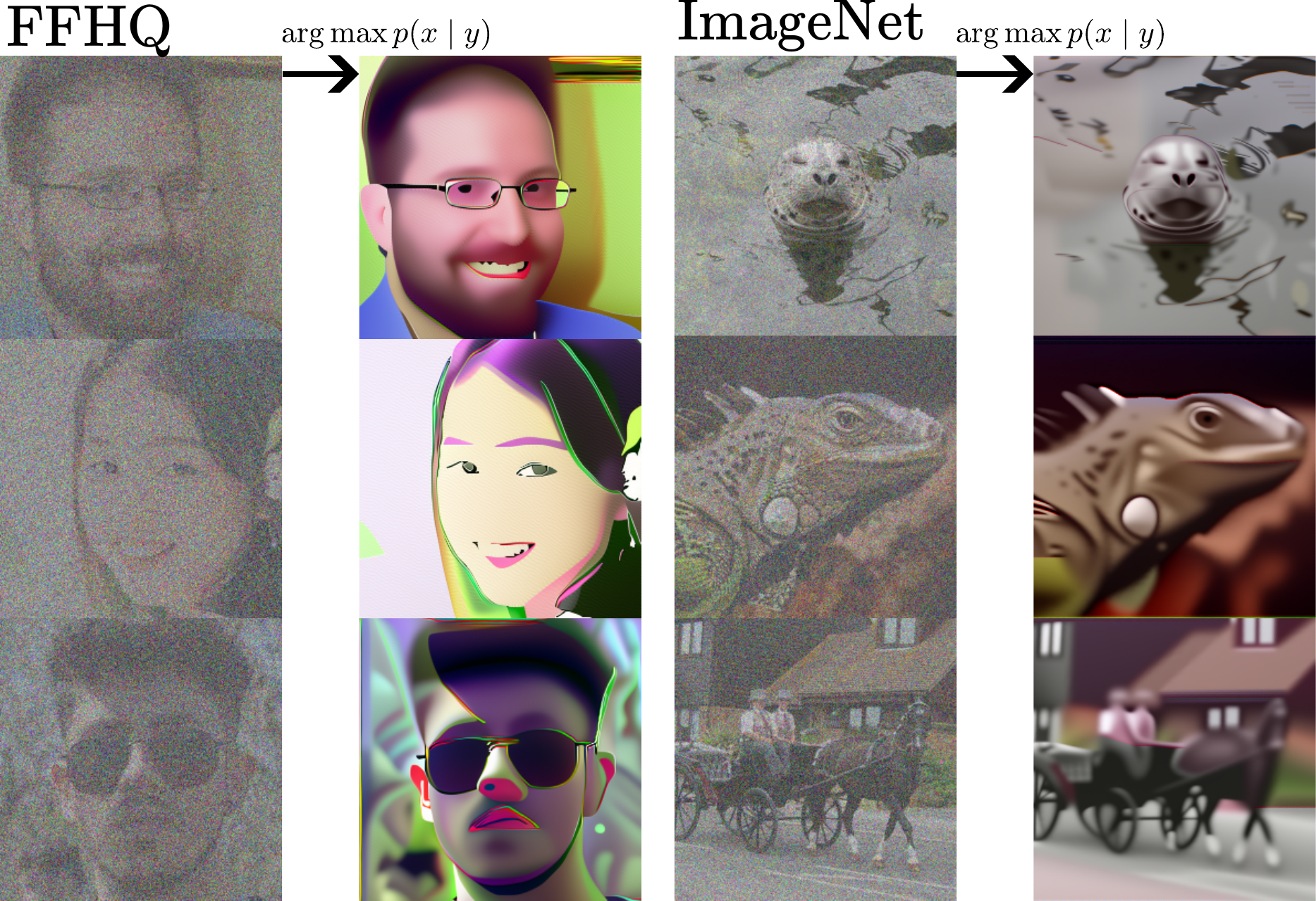}
    \caption{The MAP approximation procedure of \cref{eq:original-algo} with \emph{learned scores} yields cartoon-like outputs on natural-image datasets (right) and blob-like digits on MNIST (left). For MNIST we show the noisy data (top), along with MAP estimates (middle) and MMSE estimates (bottom). Conditional log-probability of the MAP estimates is always higher than that of the MMSE.}
    \label{fig:cartoons}
\end{figure}

The crucial observation is that this is genuinely posterior maximization, not algorithmic failure. To verify this, we computed the conditional log-likelihood of each image under the learned MNIST model using the augmented Probability-Flow ODE~\citep{song21scorebased,chen18neural}. The values, superimposed on \cref{fig:cartoons} (left), confirm that iterates of \cref{eq:original-algo} have higher probability than the MMSE estimates, despite looking unrealistic.
The cartoon images are what the model considers most likely given the noisy input. Now, either (a) high-density regions of the learned distribution genuinely contain non-typical images because of the implicit biases of diffusion model training~\citep{kambganguli}, or (b) gradient descent on the non-convex landscape gets stuck in poor local maxima. 
To distinguish these, we replace the trained model with a Gaussian mixture model (GMM) centered on the training set with small isotropic variance and for which $\nabla\log p_{\sigma_k}$ is available in closed form. 
The GMM has a non-convex probability distribution so any algorithmic pathology of GD due to non-convexity should manifest here as well. 
Running~\eqref{eq:original-algo} with the exact GMM score, however, yields realistic images (the modes of the GMM) independently of initialization, while the same algorithm with the diffusion-model score produces cartoon outputs (\cref{fig:gmm-comparison}). 
Non-convexity is therefore not the issue. The cartoon effect is a property of the \emph{learned} score, whose zeros are displaced from those of the true distribution by the implicit biases of diffusion model training. As a further check, in \cref{app:memorizing} we train a diffusion model in the \emph{memorizing regime}~\citep{biroli2024dynamical}---where the model has effectively recovered the empirical score---and find that MAP estimates from~\eqref{eq:original-algo} are then artifact-free. \cref{app:discussion} discusses broader implications for MAP estimation under learned priors.

\begin{figure}[t]
    \centering
    \includegraphics[width=0.8\linewidth]{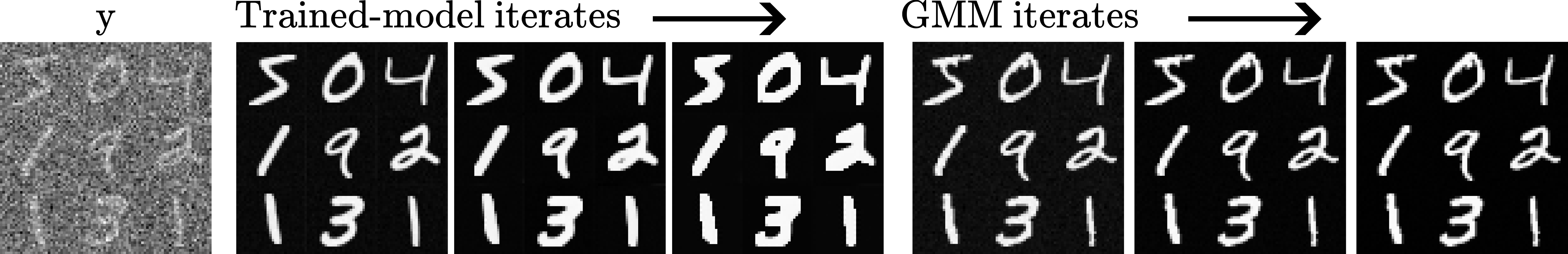}
    \caption{Iterates of \cref{eq:original-algo} with the score from a diffusion model (left) or from a GMM (right). With exact scores the iteration recovers clean modes; with learned scores it drifts to cartoon outputs.}\vspace*{-0.3cm}
    \label{fig:gmm-comparison}
\end{figure}

\textbf{Implication for algorithm design.~~}
The diagnosis suggests two things. First, the failure is not intrinsic to MAP estimation: with exact scores the procedure works. Second, the failure occurs as the iteration converges: early iterates of~\eqref{eq:original-algo} typically resemble plausible images.
In the next section we address how to stop short of convergence in a principled way.

\subsection{The \trueprox/ algorithm}\label{sec:proximap-design}

We wish to develop an algorithm which targets the MAP objective while keeping iterates in a regime where the denoiser is reliable: the residual noise in $x_k$ should match the noise level $\sigma_k$ at which the denoiser is queried.
\trueprox/ achieves this by exploiting a structural property of the smoothed GD recursion~\eqref{eq:gd-general2}, initialized at $x_0 = y$.

\textbf{Tracking residual noise.~~}
Treating each denoiser output as noiseless and writing $\sigma_y$ for the standard deviation of $y$, the recursion can be analyzed term by term:
\begin{equation*}
    \underbrace{x_{k+1}}_{\sigma_{k+1}} = \bigg(1 - \gamma_k \Big( 1 + \frac{\tau}{\sigma_k^2} \Big) \bigg) \underbrace{x_k}_{\sigma_k} + \gamma_k \underbrace{y}_{\sigma_y} +  \frac{\tau \gamma_k}{\sigma_k^2} \underbrace{\bE[X\mid X_{\sigma_k} = x_k]}_{0}.
\end{equation*}
Since $x_0 = y$ and successive denoiser outputs carry no noise, the noisy component of every $x_k$ remains a multiple of $y$. Standard deviations therefore combine linearly, giving
\begin{equation}\label{eq:sigma_sched2}
    \sigma_{k+1} = \left( 1 - \gamma_k \Big( 1 + \frac{\tau}{\sigma_k^2} \Big) \right) \sigma_k + \gamma_k \sigma_y.
\end{equation}
This expression gives the residual noise level of $x_k$ as a function of the schedule. Using $\sigma_k$ from~\eqref{eq:sigma_sched2} as the noise conditioning at step $k$ ensures that the denoiser always operates on in-distribution inputs.

\textbf{Step-size choice.~~}
The smoothed objective $x \mapsto \frac{1}{2}\norm{x - y}^2 -\tau \log p_{\sigma_k}(x)$ at step $k$ is $L_{\sigma_k}$-smooth with $L_{\sigma_k} = 1 + \tau/\sigma_k^2$ \citep{pesme2025map}. The standard step-size for $L$-smooth objectives is $\gamma_k = \beta / L_{\sigma_k}$ with $\beta \in (0,1)$ a hyperparameter, which yields %
\begin{equation}\label{eq:tp-schedule}
    \sigma_{k+1} = ( 1 - \beta) \sigma_k + \frac{\beta}{1 + \tau/\sigma_k^2} \sigma_y,\qquad \gamma_k = \frac{\beta}{L_{\sigma_k}}.
\end{equation}
For the algorithm to recover a sharp image we need $\sigma_k \to 0$, which (\cref{app:lemma} in \cref{app:sec:proof-lemma}) requires $\tau > \sigma_y^2/4$. We parametrize $\tau = \tau_{\mathrm{mul}} \cdot \sigma_y^2/4$ with $\tau_{\mathrm{mul}} > 1$ to guarantee this. A noise-matching motivation also appears in the SNORE algorithm~\citep{renaud2024plug}, where it is used for stochastic regularization rather than as the constraint for a deterministic MAP-targeting schedule.

\textbf{Pseudocode and hyperparameters.~~}
The pseudocode for \trueprox/ is given in \cref{algo:fullprox}, with main hyperparameters $\tau_{\mathrm{mul}}$, $\beta$, and $K$. 
To minimize tuning efforts, we fix $\tau_{\mathrm{mul}} = 10$ as it only influences the learning rate which we tune through $\sigma_K$.
We fix the number of iterations to $K=8$, following a denoising experiment which revealed that perceptual quality (LPIPS) saturates beyond this point (see \cref{sec:proximap-denoiser}).
We choose $\beta$ so that the final variance $\sigma_K^2$ attains a prescribed value given $\sigma_y$, $K$, and $\tau_{\mathrm{mul}}$. The final variance is therefore the only hyperparameter actively tuned in practice; a typical tuning range is $\sigma_K \in [0.001, 0.1]$. Finally, we return the last MMSE estimate $\hat{x}^{\mathrm{MMSE}}_{K} = D_\theta(x_{K-1}, \sigma_{K-1})$ since it provides a final noise-free output, whereas $x_K$ still contains a residual scaled version of $y$.
\begin{algorithm}[h]
\caption{The \trueprox/ algorithm.
Input: noisy image $y$ (noise variance $\sigma_y^2$) and a trained denoiser $D_\theta$.
Output: approximate MAP denoising solution.}
\label{algo:fullprox}
\begin{algorithmic}[1]
\Require $\tau_{\mathrm{mul}}, \sigma_y, y, \beta, K$, denoiser $D_\theta$
\State $(x_0, \sigma_0) \gets (y, \sigma_y) $
\State $\tau \gets \tau_{\mathrm{mul}} \frac{\sigma_y^2}{4}$ and $\gamma_0 \gets \frac{\beta}{1 + \sfrac{\tau}{\sigma_{0}^{2}}}$
\For{$k = 0, \dots, K - 1$}
    \State $\hat{x}^{\mathrm{MMSE}}_{k+1} \gets D_\theta(x_k, \sigma_k)$
    \State $x_{k+1} \gets (1 - \gamma_k ( 1 + \sfrac{\tau}{\sigma_k^2}))x_k + \gamma_k x_0 + \frac{\tau \gamma_k}{\sigma_k^2} \hat{x}^{\mathrm{MMSE}}_{k+1}$
    \State $\sigma_{k+1} \gets (1 - \gamma_k ( 1 +  \sfrac{\tau}{\sigma_k^2})) \sigma_k +\gamma_k \sigma_y$
    \State $\gamma_{k+1} \gets \frac{\beta}{1 + \sfrac{\tau}{\sigma_{k+1}^{2}}}$
\EndFor
\State \Return $\hat{x}^{\mathrm{MMSE}}_{K}$ \Comment{final denoiser call applied to $x_{K-1}$ at noise level $\sigma_{K-1}$}
\end{algorithmic}
\end{algorithm}

\textbf{\trueprox/ as principled early stopping.~~}
Denote by $\hat{\sigma}(x_k)$ the residual noise present in any iterate $x_k$, computed like before by treating denoiser outputs as noiseless. The iteration of \cref{eq:original-algo} uses a mismatched noise schedule $\sigma_k > \hat{\sigma}(x_k)$ for $k > 1$, in order to show convergence to the MAP. The \trueprox/ schedule of \cref{eq:tp-schedule} instead enforces $\sigma_k = \hat{\sigma}(x_k)$ for all $k$. Since the step-size $\gamma_k$ shrinks together with $\sigma_k$, the iteration slows down as it approaches the data manifold, avoiding mismatched  queries which lead to artifacts, and ensuring stable results in few steps.
\trueprox/ is therefore not a more accurate approximation of the learned MAP; it is an explicit mechanism for stopping short of it.
\subsection{\trueprox/ in PnP algorithms}\label{sec:proximap-pnp}

Having established that \trueprox/ performs effectively as a MAP denoiser, we next integrate it into PnP inverse problem solvers by replacing the standard MMSE estimator in their inner loop. 
Among many existing PnP algorithms~\citep{kamilov23pnp,li25survey}, we select three representatives:
\begin{itemize}[leftmargin=*, itemsep=0pt, topsep=0pt, parsep=0pt]
    \item \dpir/~\citep{zhang2021plug} is based on half-quadratic splitting with a schedule that yields good results in few iterations. 
    \item \diffpir/~\citep{diffpir} improves the performance of \dpir/ on generative tasks such as block inpainting by introducing a \emph{noise-addition} step on top of existing data-fidelity and denoising steps. The added noise goes to 0 during optimization.
    \item \daps/~\citep{daps} replaces the denoising step with an iterative procedure which samples from $p(x\mid x_{\sigma_t})$. 
    Since the inner loop of \daps/ already requires multiple evaluations of the model, \trueprox/ does not introduce any computational overhead.
\end{itemize}
For the first two, we keep the same iteration and scheduling and replace calls to the denoiser $D_\theta$ with calls to \trueprox/.
For \daps/ we replace the call to the diffusion sampling algorithm, and modify the outer noise schedule to match \cref{eq:sigma_sched2}. Algorithm details are provided in \cref{app:algos}. 

\textbf{Fast hybrid variant.~~}
Because \trueprox/'s computational cost scales with the number of inner iterations $K$, replacing every MMSE call in a PnP outer loop with a full \trueprox/ call is expensive. The noise-matching analysis above suggests a natural way to reduce this cost while preserving its benefits: apply \trueprox/ only in the late iterations of the PnP outer loop, where the iterate's residual noise is low and the denoiser's score estimate is most reliable. In the early iterations, the iterate is still far from the data manifold, the score estimate is less critical, and a single MMSE call suffices. 
We refer to this as the \emph{hybrid variant} of \trueprox/+PnP. When applying \trueprox/ do the last PnP iteration, we call the corresponding variant \texttt{Fast} \trueprox/. 
\cref{sec:experiments} shows that the hybrid variant matches or exceeds the full-replacement variant at substantially reduced cost.

%% file: sec/experiments.tex
\subsection{Validating the algorithm: standalone denoising}\label{sec:proximap-denoiser}

We begin by characterizing \trueprox/ as a standalone denoiser, both to compare it to other diffusion-based denoising algorithms, and to determine the iteration count $K$ to be used in the subsequent PnP experiments. 
Given $y = x + \sigma_y \epsilon$ with $\epsilon \sim \cN(0, I)$, the goal is to recover $x$. 
The denoiser $D_\theta$ is a diffusion model from \citet{dhariwal2021diffusion} trained on ILSVRC-2012 (ImageNet). 
We compare \trueprox/ to the conditional MMSE estimate $D_\theta(y, \sigma_y) \approx \bE[X \mid X_{\sigma_y} = y]$ and to three deterministic diffusion samplers adapted to the conditional setting: DDIM~\citep{song2021denoising}, Flow Matching~\citep{liu22flow,albergo22flow}, and InDI~\citep{delbracio23inversion}. Each sampler is initialized from the timestep corresponding to $y$ according to the original sampler's schedule, rather than from the initial timestep which corresponds to pure noise. Performance is evaluated in terms of distortion (PSNR, corresponding to pixel-wise accuracy, higher is better) and perception (LPIPS, correlating with the human perception of sharpness, lower is better); Additional details are provided in \cref{app:denoising}.

\Cref{fig:denoising} shows two consistent patterns. First, all iterative methods improve LPIPS over the MMSE estimator while degrading PSNR. This is the standard perception-distortion trade-off~\citep{blau2018perception}. Second, \trueprox/ dominates the conditional samplers across all step counts and noise levels, sitting strictly closer to the optimal corner of the plane. LPIPS improvement saturates beyond 8 steps for \trueprox/ while computational cost grows linearly, which is confirmed by Figure~\ref{fig:ffhq-denoising} from~\cref{app:add_results} on the FFHQ dataset; we therefore use $K = 8$ throughout the rest of the paper.

\begin{figure}
    \centering
    \includegraphics[width=\linewidth]{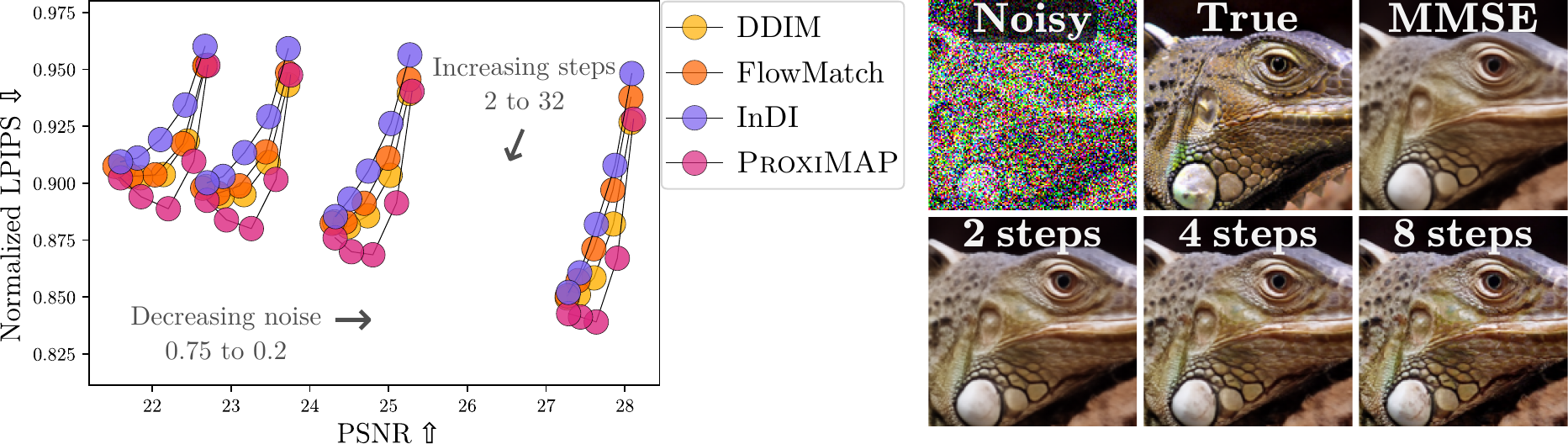}
    \caption{\emph{(left)} PSNR and LPIPS (normalized so the MMSE estimator's LPIPS is 1) across noise levels and step counts for \trueprox/ and three conditional samplers on ImageNet. \trueprox/ dominates the others by remaining closest to the (high-PSNR, low-LPIPS) corner. \emph{(right)} Outputs of \trueprox/ for increasing $K$ (the MMSE estimator is the $K=1$ case). Results on FFHQ are in \cref{fig:ffhq-denoising}.}
    \label{fig:denoising}
\end{figure}

\begin{figure}[h!]
    \centering
    \includegraphics[width=0.90\linewidth]{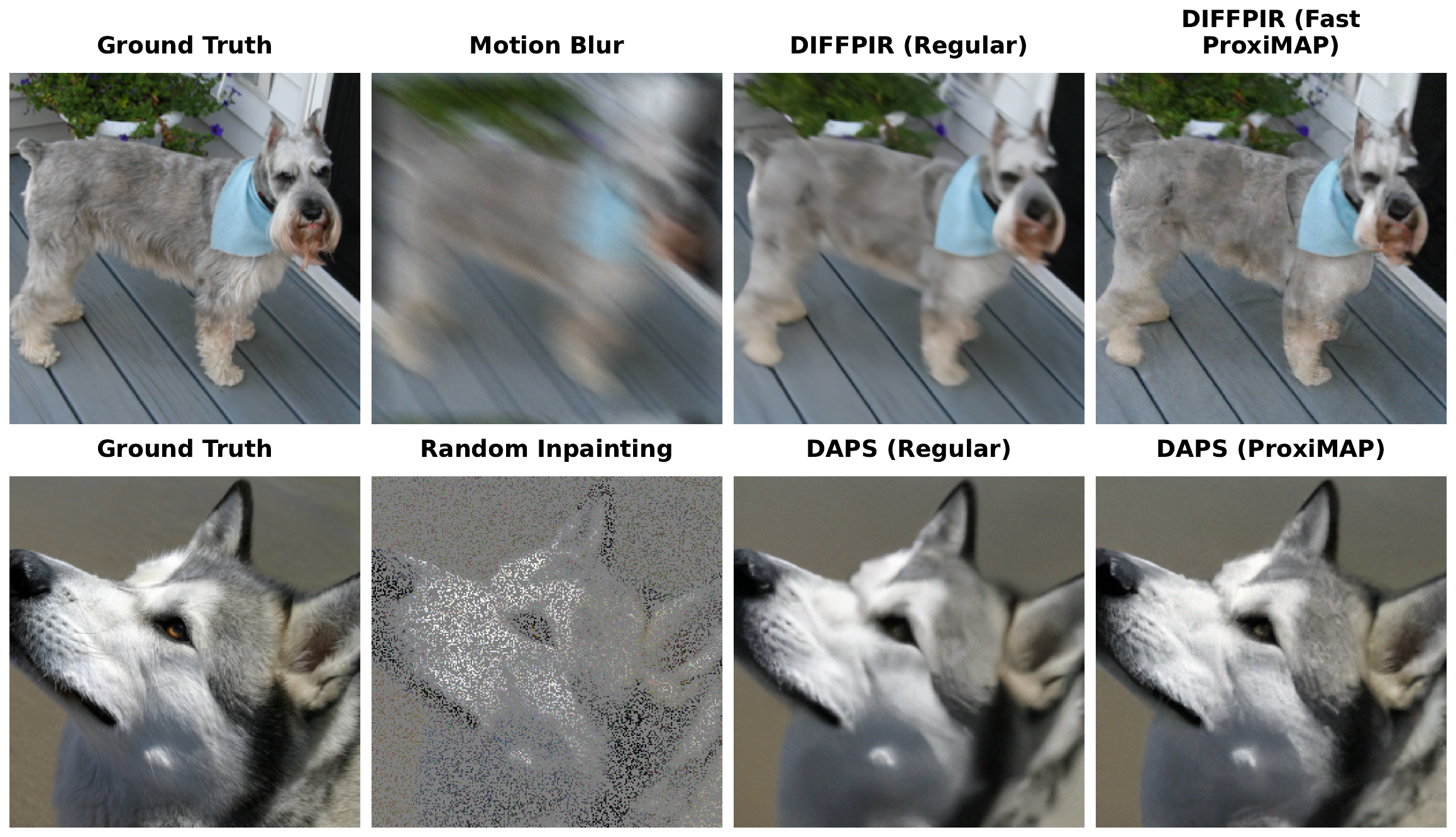}
    \caption{Examples of image restoration from ImageNet illustrating the detail enhancing effect of \trueprox/ already see in the denoising task of Figure~\ref{fig:denoising}. Other examples can be found in \cref{app:add_results}. }
    \label{fig:pnp_img_ex}
\end{figure}

\begin{figure}[h!]
    \centering
    \includegraphics[width=0.98\linewidth]{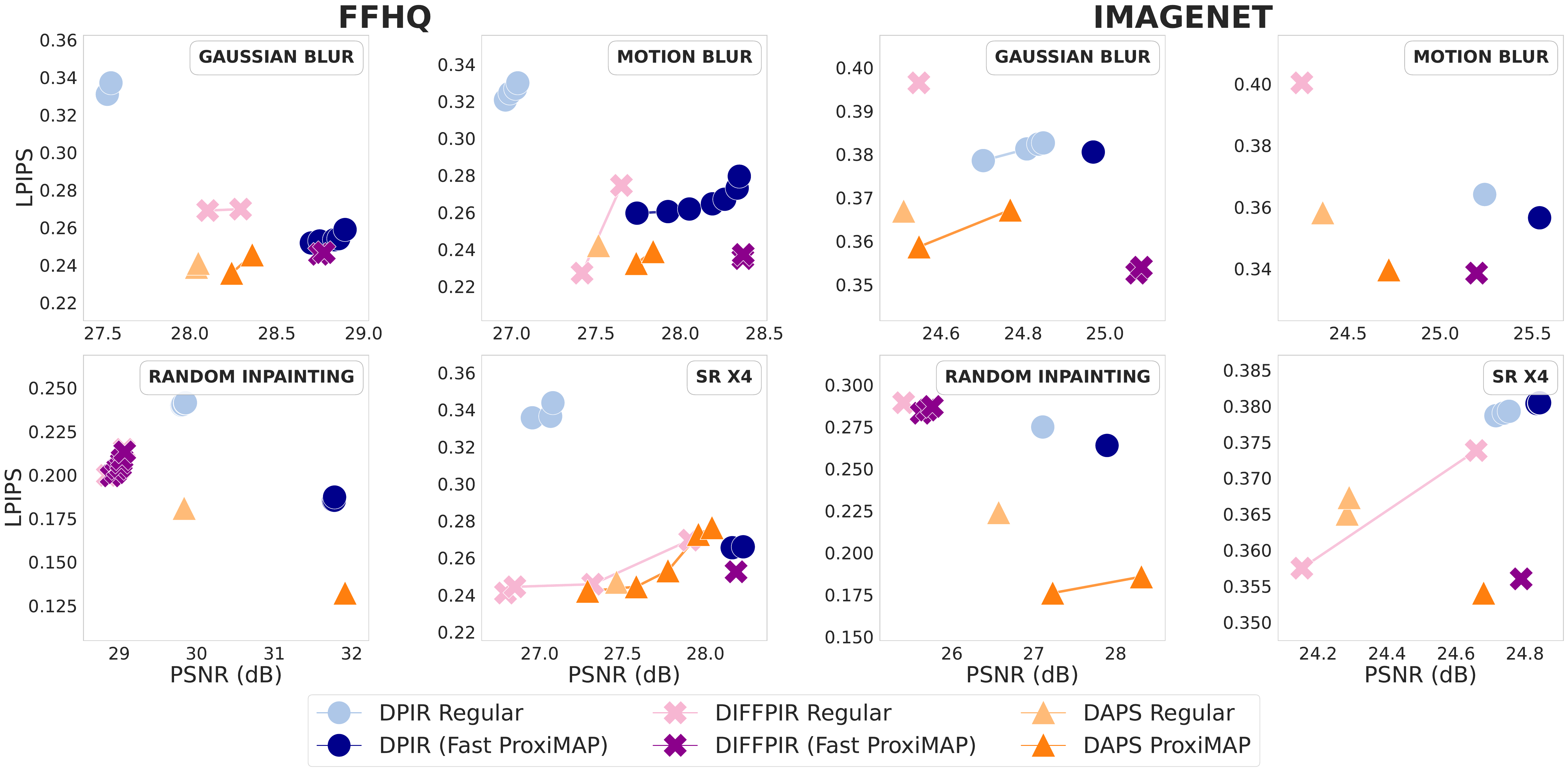}
    \caption{Three PnP algorithms versus their \trueprox/ counterparts on four inverse problems. Lighter colors are baselines; darker colors are \trueprox/ variants. \trueprox/ shifts the Pareto fronts toward the optimal (high-PSNR, low-LPIPS) corner across nearly all settings. }
    \label{fig:fp-comp}
\end{figure}

\subsection{PnP experiments}\label{sec:pnp-experiments}

We next evaluate \trueprox/ as a drop-in replacement for the MMSE denoiser in the three state-of-the-art PnP frameworks we introduced in \cref{sec:proximap-pnp}: \dpir/, \diffpir/, and \daps/. All experiments use FFHQ~\citep{karras2019style} and ImageNet~\citep{imagenet} samples at $256 \times 256$ resolution; the underlying diffusion models are fixed (the network of~\citet{choi21ilvr} for FFHQ and the unconditional model of~\citet{dhariwal2021diffusion} for ImageNet). For \dpir/ and \diffpir/, we keep the outer loop unchanged (20 iterations) and replace each MMSE call with $K=8$ inner \trueprox/ iterations except for the \texttt{Fast} variant that only calls \trueprox/ for the last PnP iteration resulting to only 7 additional calls to the denoiser. \daps/'s inner loop already runs an iterative sampling procedure, which we replace by 6 \trueprox/ iterations at no additional cost.
We consider several tasks for the evaluation, which are detailed in \cref{app:tasks}.

\paragraph{Automatic hyperparameter tuning protocol.} PnP methods are sensitive to hyperparameter choices, which are usually changed depending on the task and the noise level. However, reporting is often inconsistent in the literature. For example, certain papers~\citep{diffpir,reddiff} scale images to $[0,1]$ while others~\citep{pnpflow,sitcom} use $[-1,1]$, leading to noise levels that do not match. To allocate the same tuning budget to every method, we run Bayesian optimization (BO) for each \emph{(dataset, algorithm, problem)} triplet using the Ax library~\citep{olson2025ax} with 15 random-search iterations followed by 15 BO iterations, optimizing both LPIPS and PSNR on a held-out tuning set of 30 images. Using the hyperparameters lying on the Pareto front which results from BO, we then compute all metrics on a separate test-set of 70 images.  Full hyperparameter ranges and tuning details for every method are provided in \cref{app:pnp}. 

\paragraph{The hybrid variant: better and cheaper.} 
Because \trueprox/ serves as a strictly more accurate denoiser than MMSE, it naturally acts as a drop-in replacement in standard PnP algorithms. However, because of the double loop, a full replacement of MMSE with \trueprox/ increases the number of function evaluations (NFEs) of \dpir/ and \diffpir/ by a factor of $8$. The noise-matching analysis (\cref{sec:proximap-design}) suggests this is unnecessary: only the late iterations of a PnP outer loop operate in the regime where \trueprox/'s reliance on score accuracy pays off. We therefore introduce the hybrid variant, which we call Fast \trueprox/: it uses MMSE denoising for the first $n$ outer iterations of \dpir/ and \diffpir/ and switches to \trueprox/ for the remaining $20-n$. Empirically, $n = 19$ works best across tasks. The corresponding study is presented in \cref{fig:switch_steps_dpir} and \cref{fig:switch_steps_diffpir} of \cref{app:add_results}. 

\paragraph{Drop-in replacement.} The full-replacement and Fast \trueprox/ variants are compared in \cref{tab:big-comp} alongside four further diffusion-based PnP algorithms (\dps/~\citep{dps}, \sitcom/~\citep{sitcom}, \reddiff/~\citep{reddiff}, and \pnpflow/~\citep{pnpflow}). Furthermore, \cref{fig:fp-comp} shows the perception-distortion plane for six inverse problems with noise level $\sigma_y = 0.05$ (see also \cref{fig:full-replacement_comparison} from \cref{app:add_results}). Fast \trueprox/ matches or exceeds the full-replacement variant on most settings, with particularly strong gains for \diffpir/, while reducing NFEs from 160 to 27, a reasonably small overhead over the baseline approach (NFE=20). We do not apply \texttt{Fast} \trueprox/ to \daps/, whose inner loop is structurally different (sampling) and whose \trueprox/-augmented variant already runs only 6 inner iterations. Further details, additional visual results, and discussion on failure cases are provided in \cref{app:add_results}. Across this comparison, the \trueprox/-augmented methods are competitive with the state of the art and frequently best on the perception-distortion frontier. No single method dominates every task, which reflects both the perception-distortion trade-off~\citep{blau2018perception} and the genuine heterogeneity of the problems. This is precisely why the across-the-board improvements provided by \trueprox/, and especially Fast \trueprox/, matter: rather than a new specialized algorithm, our contribution is a modular component that lifts the perception axis of an entire family of existing methods.

\begin{table}[t]
   \caption{Summary of results on the ImageNet dataset. We pick the HP sample from the Pareto front with the lowest LPIPS. Empty cells: \dpir/ and \diffpir/ require a closed-form proximal step and so do not apply to non-linear tasks. NFEs are doubled for methods requiring backward passes. Colors indicate whether the \trueprox/ variant is \green{better than}, \yellow{tied with}, or \red{worse than} the corresponding baseline. Full results for FFHQ are presented in \cref{tab:ffhq-comp} of \cref{app:add_results}. \vspace*{0.2cm}}
    \label{tab:big-comp}
    \setlength{\tabcolsep}{2pt}
    \resizebox{\linewidth}{!}{
    \begin{tabular}{lcccccccccccccc}
    \toprule
    \multirow{2}{*}{Algorithm} & 
    \multicolumn{2}{c}{Motion Blur} & \multicolumn{2}{c}{Gaussian Blur} & 
    \multicolumn{2}{c}{SR $\times 4$} & \multicolumn{2}{c}{Inpainting} & 
    \multicolumn{2}{c}{HDR} & \multicolumn{2}{c}{Phase Retrieval} & 
    \multirow{2}{*}{NFE} \\
    \cmidrule(lr){2-3} \cmidrule(lr){4-5} \cmidrule(lr){6-7} \cmidrule(lr){8-9} \cmidrule(lr){10-11} \cmidrule(lr){12-13}
    & PSNR & lpips & PSNR & lpips & PSNR & lpips & PSNR & lpips & PSNR & lpips & PSNR & lpips & \\
    \midrule
    
    \dpir/                       & 25.2 & 0.36 & 24.7 & 0.38 & 24.7 & 0.38 & 27.1 & 0.28 & -- & --  & -- & -- & 20 \\
    \begin{tabular}{@{}l@{}}\dpir/ +\\ ~\trueprox/\end{tabular}
                                 & \green{25.4} & \red{0.37} & \green{25.0} & \yellow{0.38} & \yellow{24.7} & \yellow{0.38} & \green{27.3} & \green{0.27} & -- & --  & -- & -- & 160 \\
    \begin{tabular}{@{}l@{}}\dpir/ +\\ ~\texttt{Fast} \trueprox/\end{tabular}
                                 & \green{{25.5}} & \yellow{0.36} & \green{25.0} & \yellow{0.38} & \green{24.8} & \yellow{0.38} & \green{27.9} & \green{0.26} & -- & --  & -- & -- & 27 \\
    \hdashline\vspace*{-0.25cm}\\ 
    \diffpir/                    & 24.3 & 0.40 & 24.6 & 0.40 & 24.2 & 0.36 & 25.4 & 0.29 & -- & --  & -- & -- & 20 \\
    \begin{tabular}{@{}l@{}}\diffpir/ +\\ ~\trueprox/\end{tabular}
                                 & \green{25.1} & \green{0.37} & \green{24.9} & \green{0.36} & \red{23.3} & \red{0.38} & \green{25.7} & \green{0.28} & -- & --  & -- & -- & 160 \\
    \begin{tabular}{@{}l@{}}\diffpir/ +\\ ~\texttt{Fast} \trueprox/\end{tabular}
                                 & \green{25.2} & \green{{0.34}} & \green{25.1} & \green{0.35} & \green{24.8} & \yellow{0.36} & \green{25.6} & \green{0.28} & -- & --  & -- & -- & 27 \\
    \hdashline\vspace*{-0.25cm}\\ 
    \daps/                       & 24.4 & 0.36 & 24.5 & 0.37 & 24.3 & 0.37 & 26.6 & 0.22 & 19.6 & 0.27 & 15.5 & 0.53 & 1200 \\
    \begin{tabular}{@{}l@{}}\daps/ +\\ ~\trueprox/\end{tabular}
                                 & \green{24.7} & \green{{0.34}} & \green{24.6} & \green{0.36} & \green{24.7} & \green{0.35} & \green{27.2} & \green{{0.18}} & \red{19.1} & \red{0.28} & \green{17.1} & \green{0.49} & 1200 \\
    \hdashline\vspace*{-0.25cm}\\ 
    \pnpflow/                    & 23.7 & 0.43 & 24.4 & 0.40 & 24.4 & 0.40 & {28.3} & 0.19 & 17.8 & 0.28 & 15.2 & 0.54 & 500 \\
    \reddiff/                    & 24.0 & 0.42 & 24.4 & 0.40 & 24.4 & 0.39 & 27.0 & 0.22 & 18.0 & 0.29 & 14.8 & 0.60 & 1000 \\
    \sitcom/                     & 24.6 & 0.35 & 24.9 & 0.37 & {25.0} & 0.33 & 26.1 & 0.27 & 17.8 & 0.34 & 14.6 & 0.55 & 800 \\
    \dps/                        & 23.7 & 0.32 & 23.7 & 0.30 & 24.2 & {0.33} & 27.0 & 0.21 & 19.4 & 0.24 & 13.5 & 0.59 & 2000 \\
    \bottomrule
\end{tabular}}
 \end{table}

%% file: sec/appendix.tex
\section{A Discussion on MAP Estimation}\label{app:discussion}
\input{sec/theoryvpractice.tex}

\section{Memorization in Diffusion Models}\label{app:memorizing}

We train two flow-matching models on the CIFAR10 dataset. The first -- the \emph{generalizing model} -- is trained on the full training set of 50000 images, the second -- the \emph{memorizing model} -- is trained on only 5000 images. Both models have the same number of parameters.
We observe that the two training losses follow a similar trajectory in the early stages of training, until a certain point after which the training loss of the memorizing model rapidly decreases.
The model thus enters the \emph{memorizing regime}~\citep{biroli2024dynamical} in which it is unable to generate novel samples and will replicate ones from the training set.
We test both models on two smoothed GD schedules: the theoretically supported one of \cref{eq:original-algo} (proposed by~\citet{pesme2025map}) and the \emph{early-stopped} one proposed here. 
Referencing \cref{fig:memorizing-cifar10}, for the generalizing model, we observe a similar effect to what we reported in \cref{app:discussion}: the MAP estimation needs to be early-stopped to prevent ending into the \emph{cartoon region}.
This however does not happen for the memorizing model. As the flow-matching network learns to replicate its training samples its score becomes more accurate and the fast, non-early-stopped algorithm of \citet{pesme2025map} generates clean, realistic images.
Of course, this does not mean that we should use a memorizing model: while it generates realistic images, these will always come from the training set - rendering it useless for any task which involves unseen samples.

\begin{figure}
    \centering
    \includegraphics[width=0.95\linewidth]{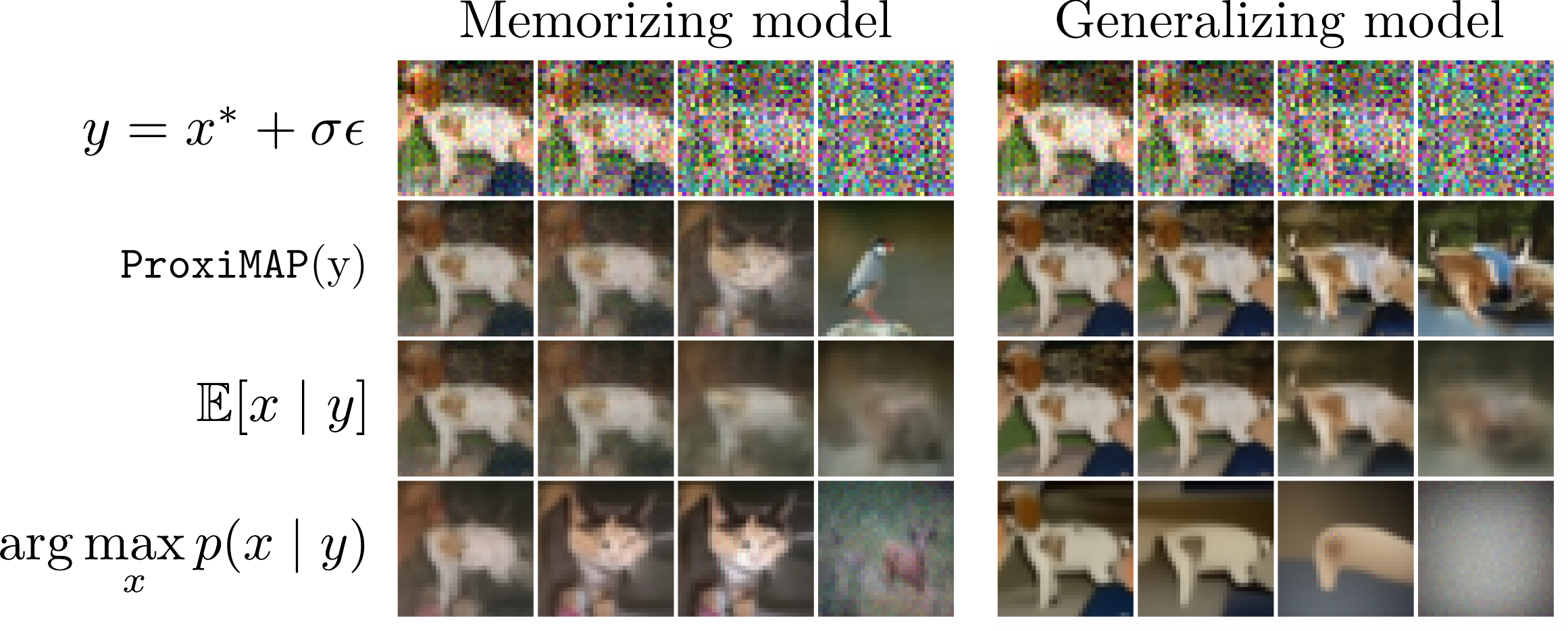}
    \caption{Comparing the conditional performance of a memorizing and a generalizing diffusion model. The rows correspond to a) the conditioning variable (which shows increasing amounts of noise from left to right), b) images obtained with \trueprox/, c) the MMSE estimate and d) the MAP obtained via \cref{eq:original-algo}.}
    \label{fig:memorizing-cifar10}
\end{figure}

\section{Behaviour of the Recursive Noise Sequence $(\sigma_k)_k$}\label{app:sec:proof-lemma}

Consider the recursive sequence defined by
\begin{equation}\label{eq:rec}
    \sigma_{k+1} = (1 - \beta)\,\sigma_k
                 + \frac{\beta\,\sigma_0}{1 + \dfrac{\tau}{\sigma_k^2}}
\end{equation}
with parameters $\beta \in (0,1)$, $\tau > 0$, $\sigma_0 > 0$, and
initialisation $\sigma_{k=0} = \sigma_0$.

\begin{lemma}\label{app:lemma}
Let $(\sigma_k)_{k \geq 0}$ be the sequence defined by~\eqref{eq:rec}.
Then $\sigma_k \to 0$ as $k \to \infty$ if and only if
\[
    \tau > \frac{\sigma_0^2}{4}.
\]
\end{lemma}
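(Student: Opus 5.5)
We treat \eqref{eq:rec} as a one-dimensional autonomous iteration $\sigma_{k+1} = F(\sigma_k)$ with
\[
F(s) = (1-\beta)s + \beta\,\frac{\sigma_0 s^2}{s^2+\tau},
\]
and analyse its fixed points on $[0,\sigma_0]$. Since $F(s) - s = \beta\big(\frac{\sigma_0 s^2}{s^2+\tau} - s\big) = \frac{\beta s}{s^2+\tau}\,\big(-s^2 + \sigma_0 s - \tau\big)$, the fixed points of $F$ in $(0,\infty)$ are exactly the positive roots of $q(s) := s^2 - \sigma_0 s + \tau$. The discriminant is $\sigma_0^2 - 4\tau$, which is where the threshold $\tau = \sigma_0^2/4$ enters. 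Moreover, $F(s) < s$ precisely when $q(s) > 0$.

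\textbf{Step 1 (monotonicity and invariance).} We first check that $F$ is increasing on $[0,\infty)$. Indeed, $F'(s) = (1-\beta) + \beta\,\frac{2\sigma_0\tau s}{(s^2+\tau)^2} > 0$ because $\beta\in(0,1)$. In addition $F(0)=0$ and $F(s)>0$ for $s>0$. Hence $\sigma_k>0$ for all $k$, and the sequence is monotone. Its direction is fixed by the sign of $F(\sigma_0)-\sigma_0$. We have $q(\sigma_0) = \tau > 0$, so $F(\sigma_0) < \sigma_0$. Since $F$ is increasing, the sequence is then nonincreasing, bounded below by $0$, and converges to a fixed point $s^*\in[0,\sigma_0)$ by continuity of $F$.

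\textbf{Step 2 (sufficiency).} Suppose $\tau > \sigma_0^2/4$. Then $q>0$ on all of $\mathbb{R}$, so the only fixed point in $[0,\sigma_0]$ is $0$, and Step 1 gives $\sigma_k\to 0$.

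\textbf{Step 3 (necessity).} Suppose $\tau \le \sigma_0^2/4$. Then $q$ has real roots $s_\pm = \frac{\sigma_0 \pm \sqrt{\sigma_0^2-4\tau}}{2}$. Both are positive because their product is $\tau>0$ and their sum is $\sigma_0>0$. Both also lie in $(0,\sigma_0)$, since $q(0)=q(\sigma_0)=\tau>0$ and the vertex is at $\sigma_0/2$. We now show that $\sigma_k \ge s_+$ for all $k$ by induction: we have $\sigma_0 > s_+$, and if $\sigma_k \ge s_+$ then $\sigma_{k+1} = F(\sigma_k) \ge F(s_+) = s_+$ because $F$ is increasing. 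Therefore the limit satisfies $s^* \ge s_+ > 0$, so $\sigma_k \not\to 0$. In fact $s^* = s_+$, because $q>0$ on $(s_+,\sigma_0]$ rules out any other fixed point there. The borderline case $\tau = \sigma_0^2/4$ is covered by the same argument with the double root $s_+ = s_- = \sigma_0/2$.

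\textbf{Main obstacle.} The only delicate point is the invariance argument in Step 3, which prevents the iterates from jumping past the positive fixed point $s_+$ towards $0$. This relies entirely on $F$ being increasing, which is why Step 1 checks $F'>0$ first. This uses $\beta<1$: for $\beta$ close to $1$, the term $(1-\beta)$ is small, but the second term of $F'$ is still nonnegative, so $F'>0$ holds for every $\beta\in(0,1)$.

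Finally, we link the lemma to \eqref{eq:tp-schedule}: that recursion is \eqref{eq:rec} with $\sigma_0=\sigma_y$. The lemma therefore gives the condition $\tau > \sigma_y^2/4$ used in the main text.
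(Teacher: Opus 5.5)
Your proof is correct and follows essentially the same route as the paper's. In both, the positive fixed points are the roots of $s^2-\sigma_0 s+\tau$, the sequence is monotone and bounded so it converges to a fixed point, and for $\tau\le\sigma_0^2/4$ the invariance $\sigma_k\ge s_+$ comes from $F$ being increasing. The differences are only cosmetic: you verify $F'>0$ explicitly and get monotonicity once for both directions via $F(\sigma_0)<\sigma_0$, while the paper uses $f(\sigma)<\sigma$ for all $\sigma>0$ in the sufficiency direction.
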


\begin{proof}
Define the map $f : (0, \infty) \to (0, \infty)$ by
\[
    f(\sigma) = (1-\beta)\,\sigma + \frac{\beta\,\sigma_0}{1 + \tau/\sigma^2}
              = (1-\beta)\,\sigma + \frac{\beta\,\sigma_0\,\sigma^2}{\sigma^2 + \tau},
\]
such that $\sigma_{k+1} = f(\sigma_k)$.
Note that $f(\sigma) > 0$ for all $\sigma > 0$, so the sequence stays positive.
Moreover, $f$ extends continuously to $\sigma = 0$ with $f(0) = 0$, so $\sigma = 0$ is always a fixed point. Concerning potential other fixed points, notice that a positive fixed point satisfies $f(\sigma) = \sigma$, i.e.\
\[
    (1-\beta)\,\sigma + \frac{\beta\,\sigma_0\,\sigma^2}{\sigma^2+\tau} = \sigma.
\]
Subtracting $(1-\beta)\sigma$ from both sides and dividing by $\beta\sigma > 0$ gives
\[
    \frac{\sigma_0\,\sigma}{\sigma^2+\tau} = 1
    \implies \sigma^2 - \sigma_0\,\sigma + \tau = 0.
\]
The discriminant of this quadratic is $\sigma_0^2 - 4\tau$ and real solutions exist if and only if $\tau \leq \sigma_0^2/4$ and the fixed points are then
\[   \sigma_\pm = \frac{\sigma_0 \pm \sqrt{\sigma_0^2 - 4\tau}}{2} > 0.\]
\noindent
We now split the proof into the two directions.  

\noindent\textbf{Sufficient condition: $\tau > \sigma_0^2/4 \implies \sigma_k \to 0$.}
\noindent
When $\tau > \sigma_0^2/4$, then $f$ has only $0$ as a fixed point. We now show $f(\sigma) < \sigma$ for all $\sigma > 0$, so $(\sigma_k)$ is strictly decreasing.
\begin{align*}
    f(\sigma) < \sigma
    &\iff (1-\beta)\sigma + \frac{\beta\sigma_0\sigma^2}{\sigma^2+\tau} < \sigma \\
    &\iff \frac{\beta\sigma_0\sigma^2}{\sigma^2+\tau} < \beta\sigma \\
    &\iff \frac{\sigma_0\sigma}{\sigma^2+\tau} < 1 \\
    &\iff \sigma^2 - \sigma_0\sigma + \tau > 0.
\end{align*}
The last inequality is true for all $\sigma$ if $\tau > \sigma_0^2/4$. Hence $(\sigma_k)$ is strictly decreasing and bounded below by $0$, so it converges to some limit $\ell \geq 0$. Taking $k\to\infty$ in the recursion $\sigma_{k+1} = f(\sigma_k)$ yields $\ell = f(\ell)$, so $\ell$ is a fixed point of $f$ in $[0,\infty)$. Since the only such fixed point is $0$, we conclude $\sigma_k \to 0$.

\medskip
\noindent\textbf{Necessary condition: $\tau \leq \sigma_0^2/4$, then $\sigma_k \not\to 0$.}
\noindent
If $\tau \leq \sigma_0^2/4$, then the largest fixed point $\sigma_+ = \frac{\sigma_0 + \sqrt{\sigma_0^2 - 4\tau}}{2} > 0$ is strictly smaller than $\sigma_0$, which means that the initialisation $\sigma_{k=0}$ is larger than any fixed point.
Now recall that 
\[f(\sigma) < \sigma \iff \sigma^2 - \sigma_0\sigma + \tau > 0.\]
Which means that $f(\sigma) < \sigma$ for $\sigma > \sigma_+$. Also a simple analysis of the function $f$ shows that it is increasing over $\mathbb{R}_{\geq 0}$. Putting things together, if $\sigma_+ \leq \sigma_k$, then 
\[\sigma_+ = f(\sigma_+) \leq \sigma_{k+1} = f(\sigma_k) \leq \sigma_k. \]
Since $\sigma_{k=0} = \sigma_0 > \sigma_+$, we have that $\sigma_k$ is a decreasing function lower-bounded by $\sigma_+$. It follows that it must converge towards some limit $\ell \geq \sigma_+$ such that $f(\ell) = \ell$. Hence $\ell = \sigma_+ \neq 0$ which concludes the proof.
\end{proof}

\section{Details about PnP Algorithms}\label{app:algos}

\Cref{algo:dpir-full,algo:diffpir-full,algo:daps-full} provide more precise pseudo-code of the algorithms we used in this work. For \dpir/ and \diffpir/, we simply plugged \trueprox/ in-place of the denoiser $D_\theta$. For \daps/ we also changed the outer-loop noise schedule (which in the original paper is the EDM~\citep{edm} schedule) to follow the \trueprox/ schedule. The resulting iterative algorithm is presented in~\Cref{algo:daps-pm-full}.

\begin{algorithm}
    \caption{\dpir/}\label{algo:dpir-full}
    \begin{algorithmic}[1]
        \Require data-fidelity $f$, denoiser $D_\theta$, $y$, $\sigma_y$, $K$
        \Require hyperparameters $\sigma_{\max}$, $\gamma$
        \State $(\sigma_k)_{k=0}^{K - 1} \gets \mathrm{logspace}(\log(\sigma_{\max}), \log(\sigma_y), K)$
        \State $x_0 \gets \cA^\top y$
        \For{$k = 0, \dots, K - 1$}
            \State $\gamma_k \gets \gamma \left(\frac{\sigma_k}{\sigma_y}\right)^2$
            \State $z \gets \prox_{\gamma_{k} f}(x_{k})$
            \State $x_{k + 1} \gets D_\theta(z, \sigma_{k})$ \hfill\Comment{Replace this with a call to \trueprox/}
        \EndFor
        \State \Return $x_{K}$
    \end{algorithmic}
\end{algorithm}

\begin{algorithm}
    \caption{\diffpir/}\label{algo:diffpir-full}
    \begin{algorithmic}[1]
    \Require data-fidelity $f$, denoiser $D_\theta$, $y$, $\sigma_y$, $K$
    \Require hyperparameters $\zeta$, $\lambda$
    \Require DDIM schedule $(\bar{\alpha}_k)_{k=1}^K$
    \State $x_0 \sim \cN(0, \sigma_{0})$ \hfill\Comment{More refined initialization in practice}
    \For{$k=0, \dots, K - 1$}
        \State $\rho_k \gets \lambda \sigma_y^2 \left(\frac{\sqrt{\bar{\alpha}_k}}{\sqrt{1-\bar{\alpha}_k}}\right)^2$
        
        \State $\hat{x}_{\mathrm{MMSE}} \gets D_\theta\left(\frac{x_k}{\sqrt{\bar{\alpha}_k}}, \frac{\sqrt{1-\bar{\alpha}_k}}{\sqrt{\bar{\alpha}_k}}\right)$\hfill\Comment{Replace this with a call to \trueprox/}
        \State $\hat{x} \gets \prox_{\frac{1}{\rho_k} f}(\hat{x}_{\mathrm{MMSE}})$

        \State $\hat{\epsilon} \gets \frac{x_k - \sqrt{\bar{\alpha}_k}\hat{x}_{\mathrm{MMSE}}}{\sqrt{1-\bar{\alpha}_k}}$

        \State $\epsilon \sim \cN(0, \sqrt{1-\bar{\alpha}_k} I)$
        \State $x_{k + 1} \gets \sqrt{\bar{\alpha}_k} \hat{x} + \sqrt{1-\bar{\alpha}_k} (\sqrt{1 - \zeta}\hat{\epsilon} + \sqrt{\zeta}\epsilon)$
    \EndFor
    \State \Return $\hat{x}_{\mathrm{MMSE}}$
    \end{algorithmic}
\end{algorithm}

\begin{algorithm}
    \caption{\daps/}\label{algo:daps-full}
    \begin{algorithmic}[1]
        \Require data-fidelity $f$, denoiser $D_\theta$, $y$, $\sigma_y$, $K$, $J$
        \Require hyperparameters $\gamma_{\mathrm{init}}$
        \Require EDM schedule $(\sigma_k)_{k=1}^K$
        \State $\gamma_{\min} \gets 0.01$
        \State $x_0 \sim \cN(0, \sigma_{\max})$
        \For{$k = 0, \dots, K-1$}
            \State $\hat{x}_{\mathrm{sample}} \sim p_0(X\mid X_{\sigma_k} = x_k, D_\theta)$ \hfill\Comment{{\tiny PF-ODE sampling (EDM schedule, Euler solver)}}
            \State $z_0 \gets \hat{x}_{\mathrm{sample}}$
            \State $\gamma_{k} \gets \gamma_{\mathrm{init}} \left(1 + \frac{k}{K} (\gamma_{\min} - 1)\right) $
            \For{$j=0, \dots, J-1$} \hfill\Comment{Langevin sampling}
                \State $\epsilon \sim \cN(0, I)$
                \State $z_{j+1} \gets z_j - \gamma_k \left( \nabla f(z_{j}, y) + \frac{z_{j} - z_0}{\sigma_k^2}\right) + \sqrt{2\gamma_k}\epsilon$
            \EndFor
            \State $x_{k+1} \gets z_J + \cN(0, \sigma_{k+1} I)$
        \EndFor
        \State \Return $z_J$
    \end{algorithmic}
\end{algorithm}

\begin{algorithm}
    \caption{\daps/+\trueprox/}\label{algo:daps-pm-full}
    \begin{algorithmic}[1]
        \Require data-fidelity $f$, denoiser $D_\theta$, $y$, $\sigma_y$, $K$, $J$
        \Require hyperparameters $\gamma_{\mathrm{init}}$, $\sigma_{\mathrm{final}}$
        \Require \trueprox/ schedule $(\sigma_k)_{k=1}^K$
        \State $\gamma_{\min} \gets 0.01$
        \State $x_0 \sim \cN(0, \sigma_{\max})$
        \For{$k = 0, \dots, K-1$}
            \State $\hat{x}_{\trueprox/} \gets \trueprox/(x_k, \sigma_k, D_\theta)$
            \State $z_0 \gets \hat{x}_{\trueprox/}$
            \State $\gamma_{k} \gets \gamma_{\mathrm{init}} \left(1 + \frac{k}{K} (\gamma_{\min} - 1)\right) $
            \For{$j=0, \dots, J-1$} \hfill\Comment{Langevin sampling}
                \State $\epsilon \sim \cN(0, I)$
                \State $z_{j+1} \gets z_j - \gamma_k \left( \nabla f(z_{j}, y) + \frac{z_{j} - z_0}{\sigma_k^2}\right) + \sqrt{2\gamma_k}\epsilon$
            \EndFor
            \State $x_{k+1} \gets z_J + \cN(0, \sigma_{k+1} I)$
        \EndFor
        \State \Return $z_J$
    \end{algorithmic}
\end{algorithm}

\section{Denoising Experimental Details}\label{app:denoising}

For the denoising experiment of \cref{fig:denoising}, we used the full testing datasets at our disposal (100 images for FFHQ, 100 for ImageNet). The task was simply to remove additive Gaussian noise with increasing variance from each test image. We set $\sigma_{\mathrm{final}}$ to 0.005 for \trueprox/ for all runs. The other algorithms (InDI, DDIM, Flow-matching) were similarly not tuned further. 
We used PSNR and LPIPS as metrics to highlight the perception-distortion tradeoff. In particular -- throughout the paper -- LPIPS was computed with the VGG-16 network.
In the main text we showed the ImageNet results. In \cref{fig:ffhq-denoising} we show the results on FFHQ. The conclusions are the same between the two datasets: while all compared algorithms perform similarly, \trueprox/ converges with fewer iterations to good solutions dominating the competing methods.

\begin{figure}
    \centering
    \includegraphics[width=0.7\linewidth]{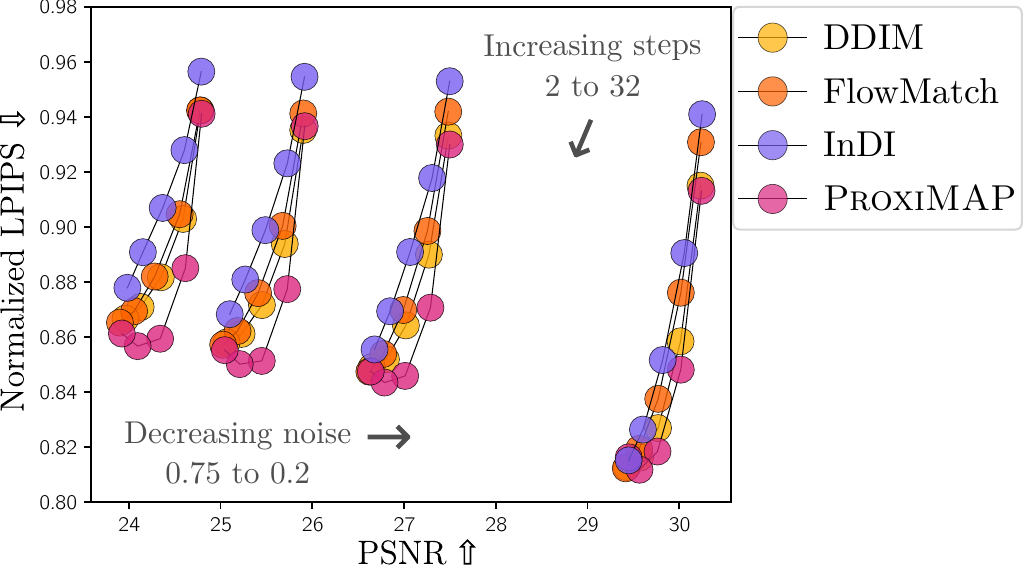}
    \caption{FFHQ Gaussian denoising results. LPIPS is normalized such that the MMSE denoiser has an LPIPS of 1.}
    \label{fig:ffhq-denoising}
\end{figure}

\section{Description of the Tasks}\label{app:tasks}

We describe in more detail the inverse problems on which we ran evaluations in the paper. The first 5 are linear problems, for which the $\prox$ of the data-fidelity term can be computed easily. 
The last two instead are non-linear and do not have a closed-form proximal operator. For this reason we did not test \dpir/ and \diffpir/ (which require the proximal operator of the data-fidelity term) on the non-linear tasks.
All tasks apart from block inpainting were run with levels $\sigma=0.05$ with additive Gaussian noise applied to the corrupted images in the $[0, 1]$ range. Block inpainting was only considered in the noiseless setting, since in practice it is more related to image editing of high-quality images rather than image restoration.
\begin{itemize}
    \item \textbf{Gaussian blur.}
    We used a kernel of size 61x61, with a standard deviation of 3 and \emph{circular} padding.
    \item \textbf{Motion blur.}
    We used a standard \href{https://github.com/LeviBorodenko/motionblur}{kernel generating function} with kernel size 61x61 and intensity of 0.5. Once again with circular padding.
    \item \textbf{Superresolution.} $\times 4$ decimation preceded by a bicubic filter, circular padding.
    \item \textbf{Random inpainting.} Masked 70\% of pixels setting them to the intermediate range value (0.5).
    \item \textbf{High dynamic range.} A non-linear forward model consisting of multiplying all pixel values by 2 and clamping them to lie within the $[0, 1]$ range.
    \item \textbf{Phase retrieval.} A second non-linear task. Consists of recovering phase information from only the real-valued part of the image's FFT. This forward model is invariant to vertical flips of the image -- and hence the recovered image may occasionally be flipped. To correctly compute reconstruction metrics for this task we evaluated them on both the original image and the flipped image, picking the metrics with the highest PSNR among the two.
\end{itemize}

\section{PnP Experiment Details}\label{app:pnp}

To run the plug-and-play experiment, for which results are reported in \cref{fig:fp-comp,tab:big-comp}, we re-implemented and ran the competing algorithms in order to allocate fair amounts of hyperparameter tuning effort. The hyperparameters and their tuning ranges are described below. Note that the hyperparameters ranges for \trueprox/ and for Fast \trueprox/ were kept the same.
\begin{itemize}
    \item \dpir/. We fixed the number of iterations to 20 and performed hyperparameter tuning by sweeping over two additional control knobs. The first one controls the schedule of decreasing noise levels with which the denoiser is called. The maximum noise level was swept between $[0.001, 100]$ considering that the original paper set it to $0.2$ (but with a different denoising model). The second controls the weight given to the data-fidelity term. It was originally set to approximately $5$ and we swept it over the range $[0.1, 40]$.
    \item \dpir/+\trueprox/. We fixed the number of \trueprox/ iterations to 8 and additionally swept the $\sigma_{\mathrm{final}}$ parameter between $[0.001, 0.2]$.
    \item \diffpir/. We fixed the number of iterations to 20 since -- while not the best performance possible -- it reaches fairly good results with minimal computation time. We swept $\lambda$ over $[0.1, 30]$ and $\zeta$ over $[0, 1]$.
    \item \diffpir/+\trueprox/. We fixed the number of \trueprox/ iterations to 8 and additionally swept the $\sigma_{\mathrm{final}}$ parameter between $[0.001, 0.2]$.
    \item \daps/. We fixed the number of Langevin steps (data-fidelity optimization) to 50, the number of outer iterations to 200 and the number of ODE sampling (inner) iterations to 6. We swept over the Langevin learning-rate parameter (which controls the strength of the data-fidelity term) between $[0.000001, 0.0002]$.
    \item \daps/+\trueprox/. In addition to \daps/ we swept over the $\sigma_{\mathrm{final}}$ parameter for the inner \trueprox/ iteration between $[0.001, 0.2]$. We fixed the outer \trueprox/ schedule to have $\sigma_{\mathrm{final}}=0.1$ which is close to the final noise-level in DDIM for 200 steps.
    \item \pnpflow/. We adapted the algorithm to work with DDPM-based models instead of flow-matching. We used 500 iterations, and did not perform averaging over multiple samples as we did not find it to significantly improve performance. We swept over the learning-rate parameter $\alpha$ between $[0.001, 0.9]$.
    \item \reddiff/. We fixed the number of steps to 1000. We swept the overall learning rate over $[0.001, 0.5]$ and the weight of the regularization term between $[0.01, 10]$, while the data-fidelity weight was fixed to 1.
    \item \sitcom/. We fixed the number of annealing iterations to 20 and of inversion iterations to 20. We swept the learning rate of the model inversion over $[0.001, 0.1]$ and $\delta\in[0.001, 0.1]$ which controls a threshold on the loss for early stopping.
    \item \dps/. We fixed the number of iterations to 1000, varied $\zeta \in [0, 1]$ (this is the noise parameter which interpolates between DDPM and DDIM), and swept over the data-fidelity gradient's strength between $[0.1, 12]$.
\end{itemize}

In \cref{tab:ffhq-comp} (which complements \cref{tab:big-comp} in the main text) we compare all the algorithms we considered on different inverse problems for noise-levels $0.05$ on the FFHQ dataset.
Note for both tables we took the configuration among the ones used for testing (which were chosen because of good hyperparameters on the validation set) with the minimum LPIPS. 
This is an arbitrary choice in order to present tabular-form results.

\begin{table}
    \caption{FFHQ, $\sigma_y=0.05$. (Same as \cref{tab:big-comp} but on FFHQ). We pick the HP sample from the Pareto front with the lowest LPIPS. Empty cells: \dpir/ and \diffpir/ require a closed-form proximal step and so do not apply to non-linear tasks. NFEs are doubled for methods requiring backward passes. Colors indicate whether the \trueprox/ variant is \green{better than}
    or \red{worse than} the corresponding baseline. }
    \label{tab:ffhq-comp}
    \setlength{\tabcolsep}{2pt}
    \resizebox{\linewidth}{!}{%
    \begin{tabular}{lccccccccccccc}
        \toprule
        \multirow{2}{*}{Algorithm} & \multicolumn{2}{c}{Motion Blur} & \multicolumn{2}{c}{Gaussian Blur} & \multicolumn{2}{c}{SR $\times 4$} & \multicolumn{2}{c}{Inpainting} & \multicolumn{2}{c}{HDR} & \multicolumn{2}{c}{Phase Retrieval} & \multirow{2}{*}{NFE} \\
        \cmidrule(lr){2-3} \cmidrule(lr){4-5} \cmidrule(lr){6-7} \cmidrule(lr){8-9} \cmidrule(lr){10-11} \cmidrule(lr){12-13}
        & PSNR & lpips & PSNR & lpips & PSNR & lpips & PSNR & lpips & PSNR & lpips & PSNR & lpips & \\
        \midrule
        \dpir/                       & 26.96 & 0.321 & 27.52 & 0.331 & 26.96 & 0.336 & 29.81 & 0.240 & --    & --    & --    & --    & {20} \\
        \begin{tabular}{@{}l@{}}\dpir/ +\\ ~\trueprox/\end{tabular}
                                     & \red{26.54} & \green{0.281}   & \red{25.05} & \green{0.289} & \red{24.91} & \green{0.300}   & \red{27.80} & \green{0.228} & -- & -- & -- & -- & 160 \\
        \begin{tabular}{@{}l@{}}\dpir/ +\\ ~\texttt{Fast} \trueprox/\end{tabular}
                                     & \green{27.74} & \green{0.260} & \green{28.70} & \green{0.252} & \green{28.16} & \green{0.266}   & \green{31.76} & \green{0.185} & -- & -- & -- & -- & 27 \\
        \hdashline\vspace*{-0.25cm}\\ 
        \diffpir/                    & 27.42 & 0.227 & 28.10 & 0.269 & 26.79 & 0.241 & 28.85 & 0.200 & --    & --    & --    & --    & {20} \\
        \begin{tabular}{@{}l@{}}\diffpir/ +\\ ~\trueprox/\end{tabular}
                                     & \green{27.49} & \red{0.236} & \red{27.33} & \green{0.210} & \green{26.86}   & \green{0.225}   & \red{28.28} & \green{0.183} & -- & -- & -- & -- & 160 \\
        \begin{tabular}{@{}l@{}}\diffpir/ +\\ ~\texttt{Fast} \trueprox/\end{tabular}
                                     & \green{28.37} & \red{0.235} & \green{28.75} & \green{0.246} & \green{28.18} & \red{0.253} & \green{28.90} & \green{0.200} & -- & -- & -- & -- & 27 \\
        \hdashline\vspace*{-0.25cm}\\ 
        \daps/                       & 27.51 & 0.242 & 28.04 & 0.239 & 27.46 & 0.247 & 29.84 & 0.181 & 22.09 & 0.197 & 24.13 & 0.309 & 1200 \\
        \begin{tabular}{@{}l@{}}\daps/ +\\ ~\trueprox/\end{tabular}
                                     & \green{27.74} & \green{0.233} & \green{28.24} & \green{0.236} & \red{27.29} & \green{0.242} & \green{31.91} & \green{0.132} & \green{22.19} & \green{0.171} & \green{26.22} & \green{0.261} & 1200 \\
        \hdashline\vspace*{-0.25cm}\\ 
        \pnpflow/                    & 27.10 & 0.298 & 28.24 & 0.281 & 27.79 & 0.286 & 31.77 & 0.139 & 20.51 & 0.188 & 23.83 & 0.306 & 500 \\
        \reddiff/                    & 26.54 & 0.291 & 28.08 & 0.274 & 27.83 & 0.286 & 30.47 & 0.168 & 20.77 & 0.207 & 16.47 & 0.485 & 1000 \\
        \sitcom/                     & 27.52 & 0.267 & 28.18 & 0.261 & 28.10 & 0.245 & 28.66 & 0.220 & 23.31 & 0.214 & 23.01 & 0.327 & 800 \\
        \dps/                        & 26.42 & 0.223 & 27.06 & 0.202 & 27.00 & 0.219 & 30.40 & 0.142 & 22.64 & 0.169 & 20.94 & 0.358 & 2000 \\
        \bottomrule
    \end{tabular}}
\end{table}

\section{Additional Results for PnP Experiments}\label{app:add_results}

\begin{figure}
    \centering
    \includegraphics[width=0.99\linewidth]{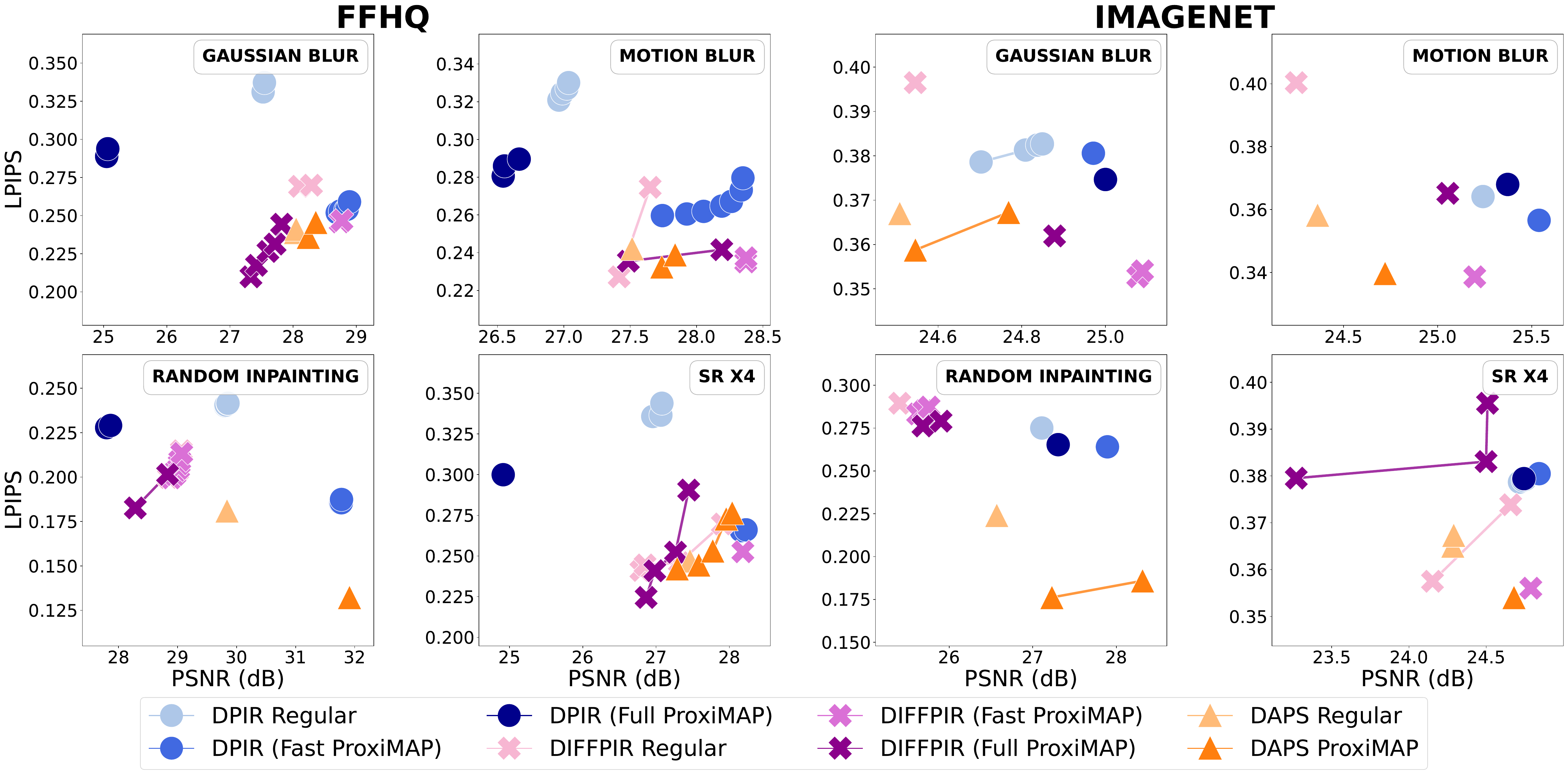}
    \caption{Baseline algorithms versus their full-replacement \trueprox/ counterparts and their fast \trueprox/ counterparts on four inverse problems at $\sigma_y = 0.05$. Lighter colors are baselines; darkest colors are full-replacement \trueprox/ and intermediate colors are fast \trueprox/ variants.}
    \label{fig:full-replacement_comparison}
\end{figure}

First, the results of fully replacing the MMSE denoiser with \trueprox/ in PnP algorithms are detailed in \cref{tab:ffhq-comp,fig:full-replacement_comparison}. \dpir/+\trueprox/ is a slight improvement on ImageNet, but is not as consistent on FFHQ---a discrepancy we attribute to \dpir/'s schedule, which does not match the residual noise of its iterates. The MMSE denoiser is robust to this mismatch; \trueprox/, by design (\cref{sec:proximap-design}), is not. The hybrid variant introduced in \cref{sec:pnp-experiments} solves this issue.

\begin{figure}
    \centering
    \includegraphics[width=\linewidth]{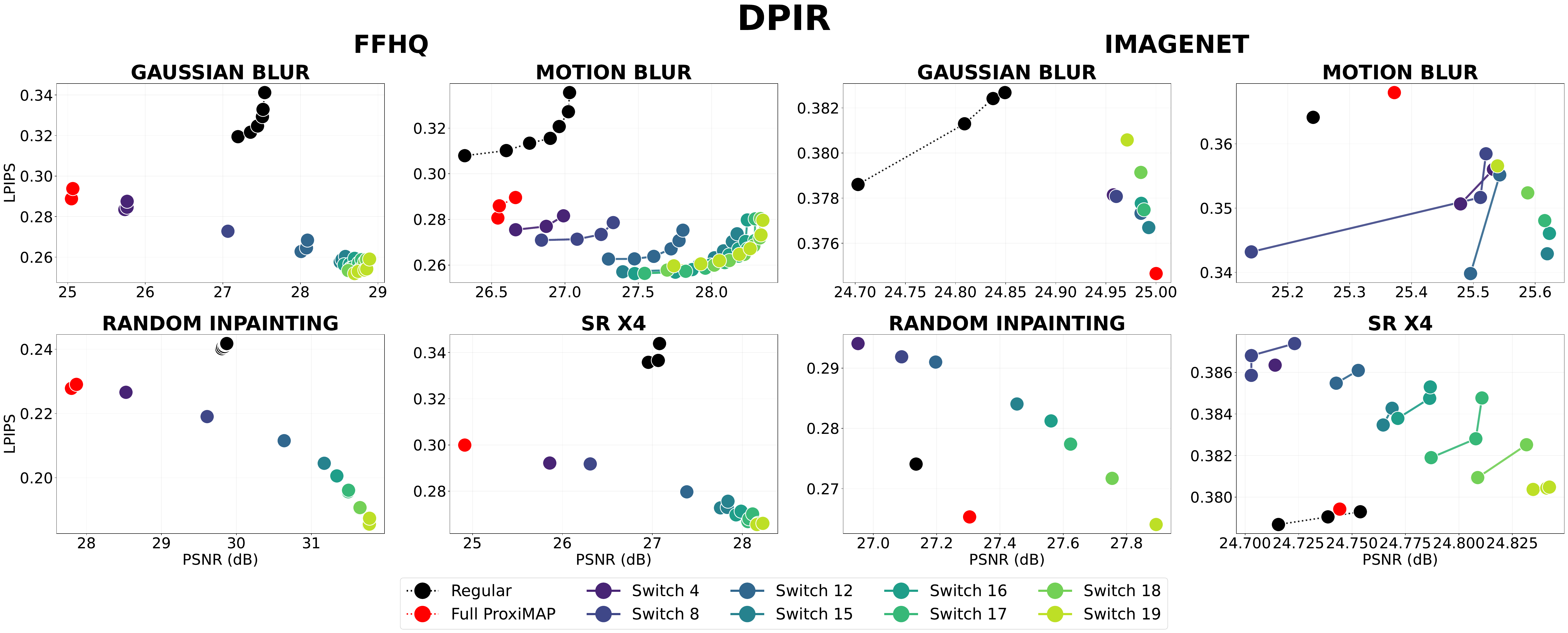}
    \caption{Trade-off between precision and computation cost for many hybrid variants of \dpir/.}
    \label{fig:switch_steps_dpir}
\end{figure}

\begin{figure}
    \centering
    \includegraphics[width=\linewidth]{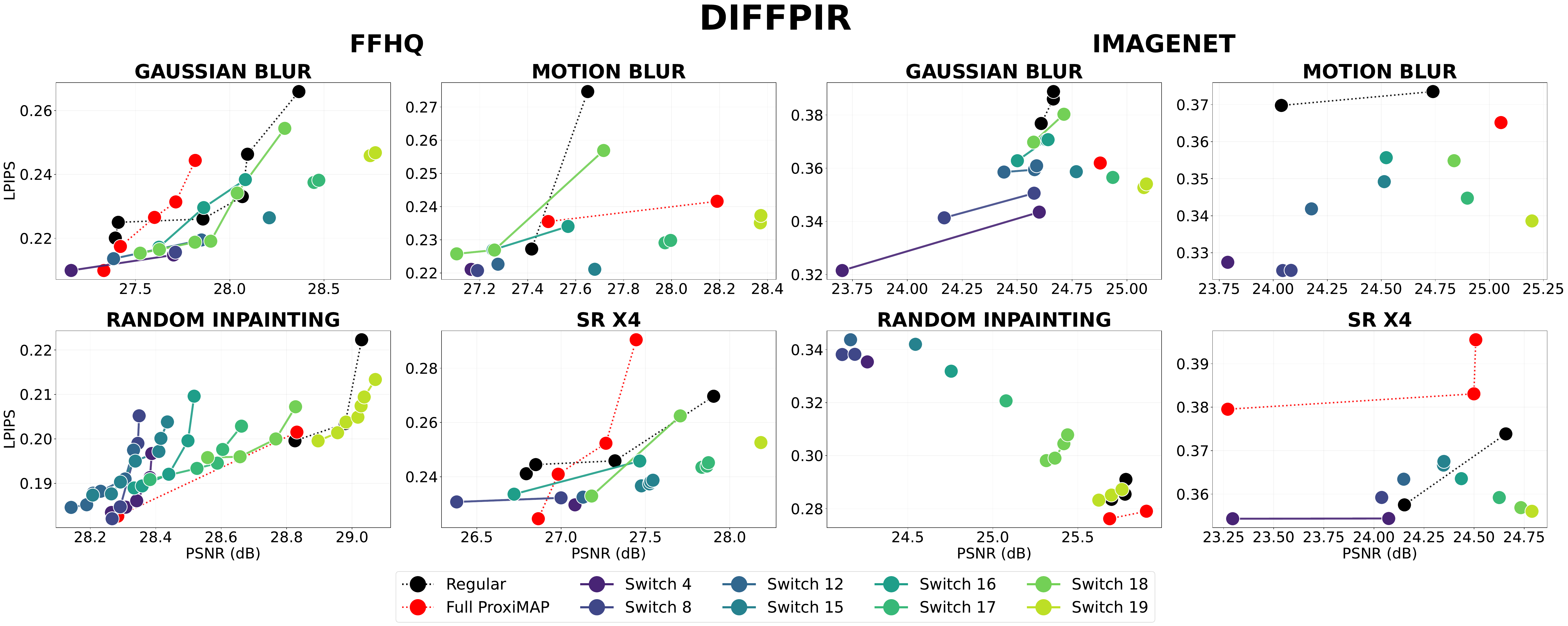}
    \caption{Same as \cref{fig:switch_steps_dpir}, but on \diffpir/}
    \label{fig:switch_steps_diffpir}
\end{figure}

To optimize the balance between computational efficiency and restoration quality, we further investigate this hybrid \trueprox/ approach. To reduce the overall number of function evaluations (NFEs), our strategy utilizes the MMSE denoiser during the initial iterations—when noise levels are high and coarse approximations are sufficient—before transitioning to \trueprox/ for final refinement. Crucially, this directly circumvents the noise schedule mismatch in \dpir/; it leverages the robust MMSE denoiser precisely during the stages where \trueprox/ is most susceptible to severe performance degradation. We hypothesized that this transition could occur late in the process, as high-fidelity refinement is unnecessary during the early stages. An analysis of the Pareto fronts across various switch steps confirms this intuition. Interestingly, delaying the switch until step 19 yielded optimal results for DPIR, a remarkably late transition given that step 20 corresponds to the baseline standard PnP. While this optimal peak at step 19 is less pronounced for \diffpir/ (particularly on FFHQ), we adopt it uniformly; it maintains an improvement over the full-replacement baseline while maximizing computational efficiency. These performance trade-offs are illustrated in \cref{fig:switch_steps_dpir,fig:switch_steps_diffpir}, with corresponding qualitative results provided in \cref{fig:dpir_gaussian,fig:daps_inpainting,fig:motion_blur_switch,fig:diffpir_sr}.

\begin{figure}
    \centering
    \includegraphics[width=\linewidth]{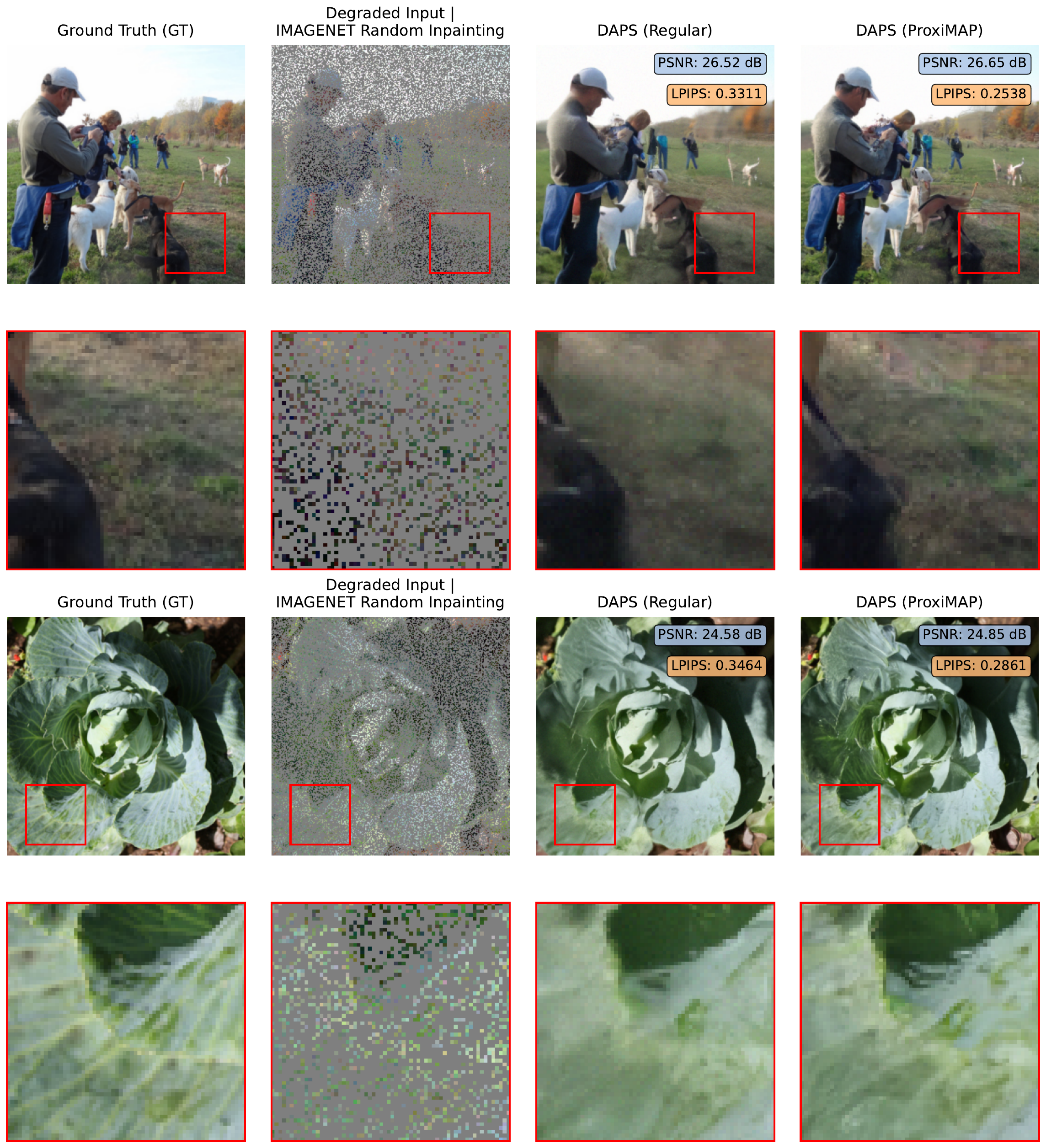}
    \caption{Qualitative results of DAPS on random inpainting. The \trueprox/ variant has a better texture on the image compared to the smoothed MMSE result}
    \label{fig:daps_inpainting}
\end{figure}

\begin{figure}
    \centering
    \includegraphics[width=0.8\linewidth]{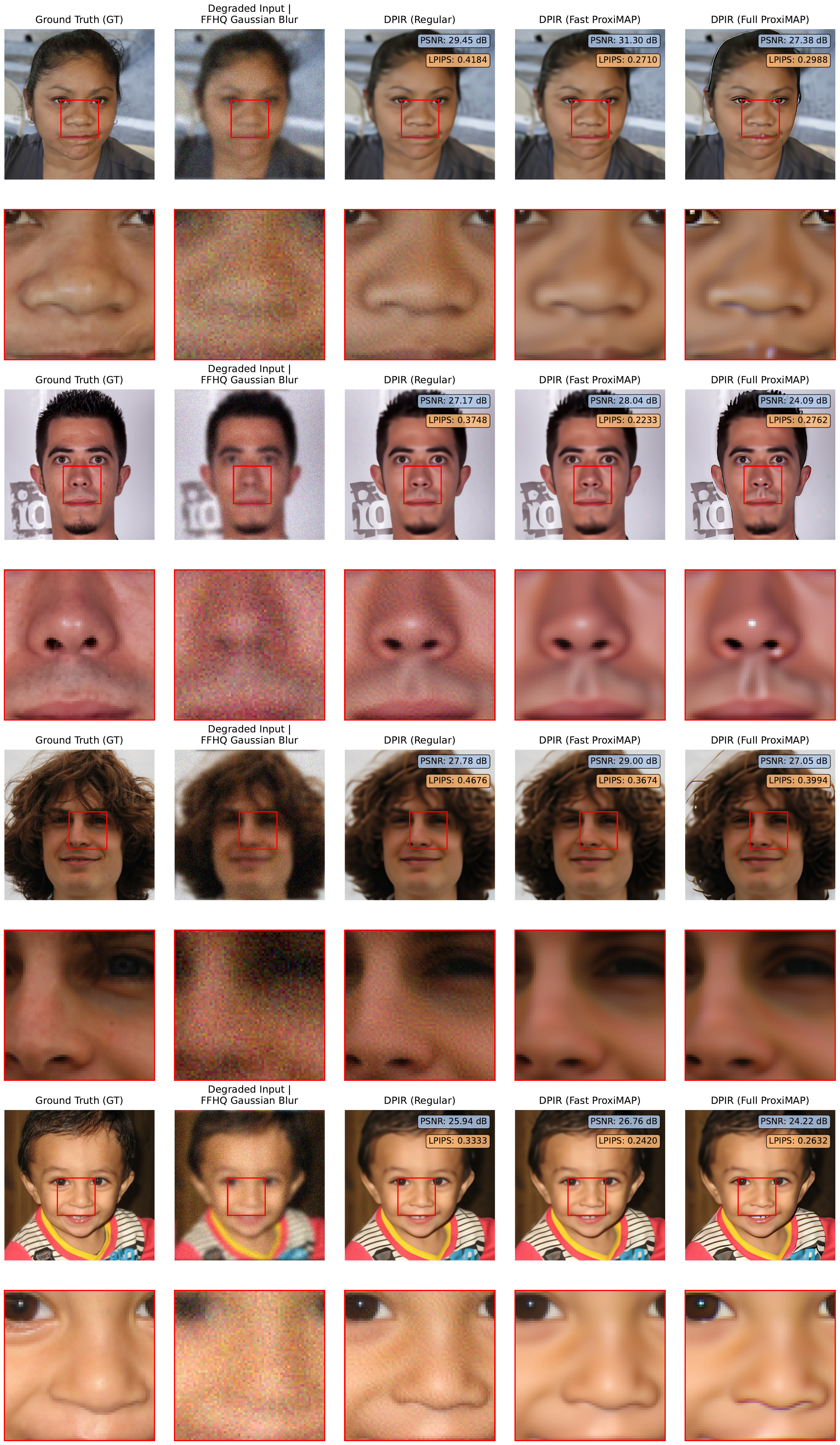}
    \caption{Example of gaussian deblur using DPIR on FFHQ. Some leftover residual noise is removed by using \trueprox/.}
    \label{fig:dpir_gaussian}
\end{figure}

\begin{figure}[t]
    \centering
    \includegraphics[width=\linewidth]{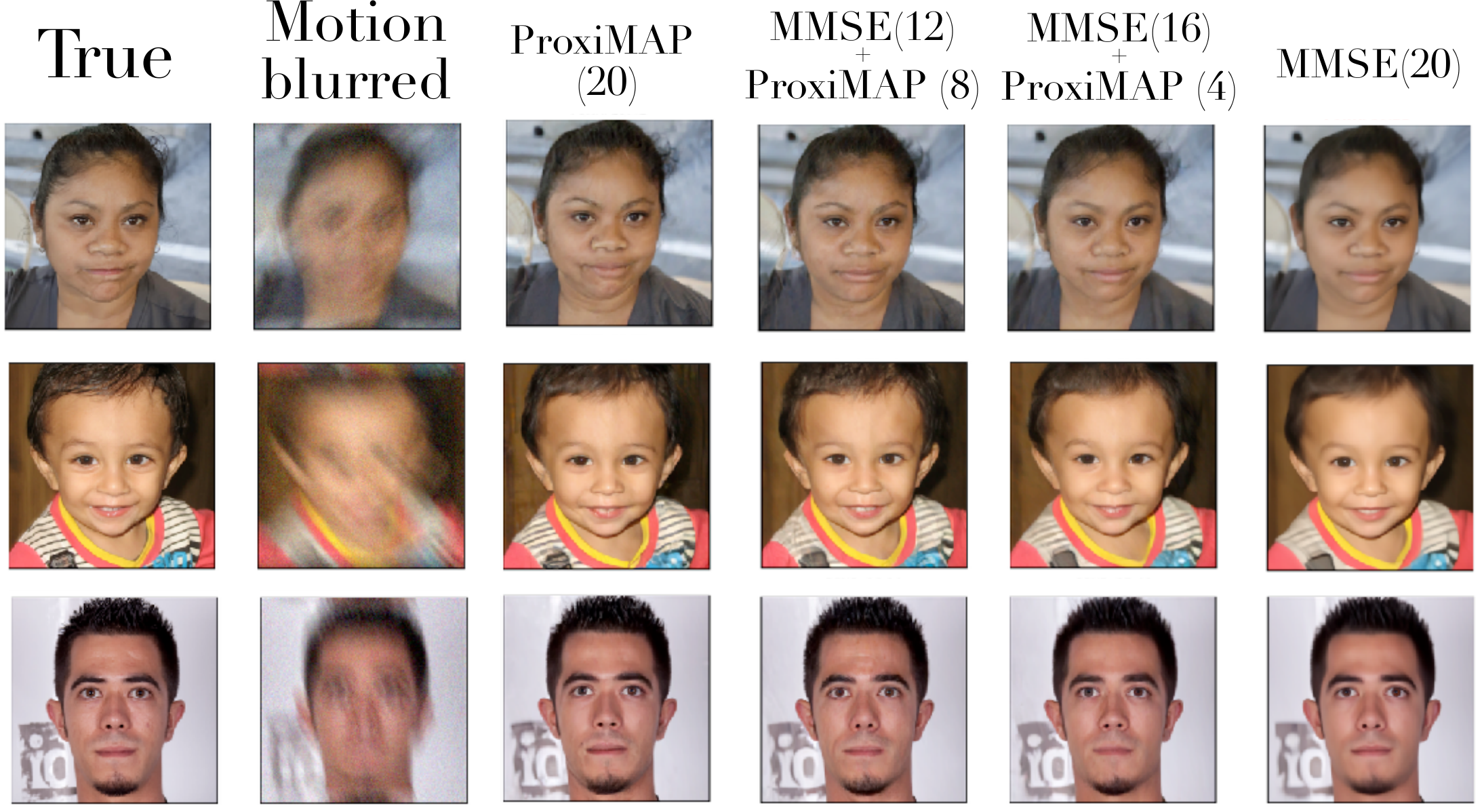}
    \caption{Qualitative visualization for different switch times on motion blur. Panel titles indicate the number of inner iterations performed at each stage with the corresponding algorithm (\trueprox/ or MMSE). Images are sharper for the \trueprox/ variants than the regular MMSE one.}
    \label{fig:motion_blur_switch}
\end{figure}

\begin{figure}
    \centering
    \includegraphics[width=\linewidth]{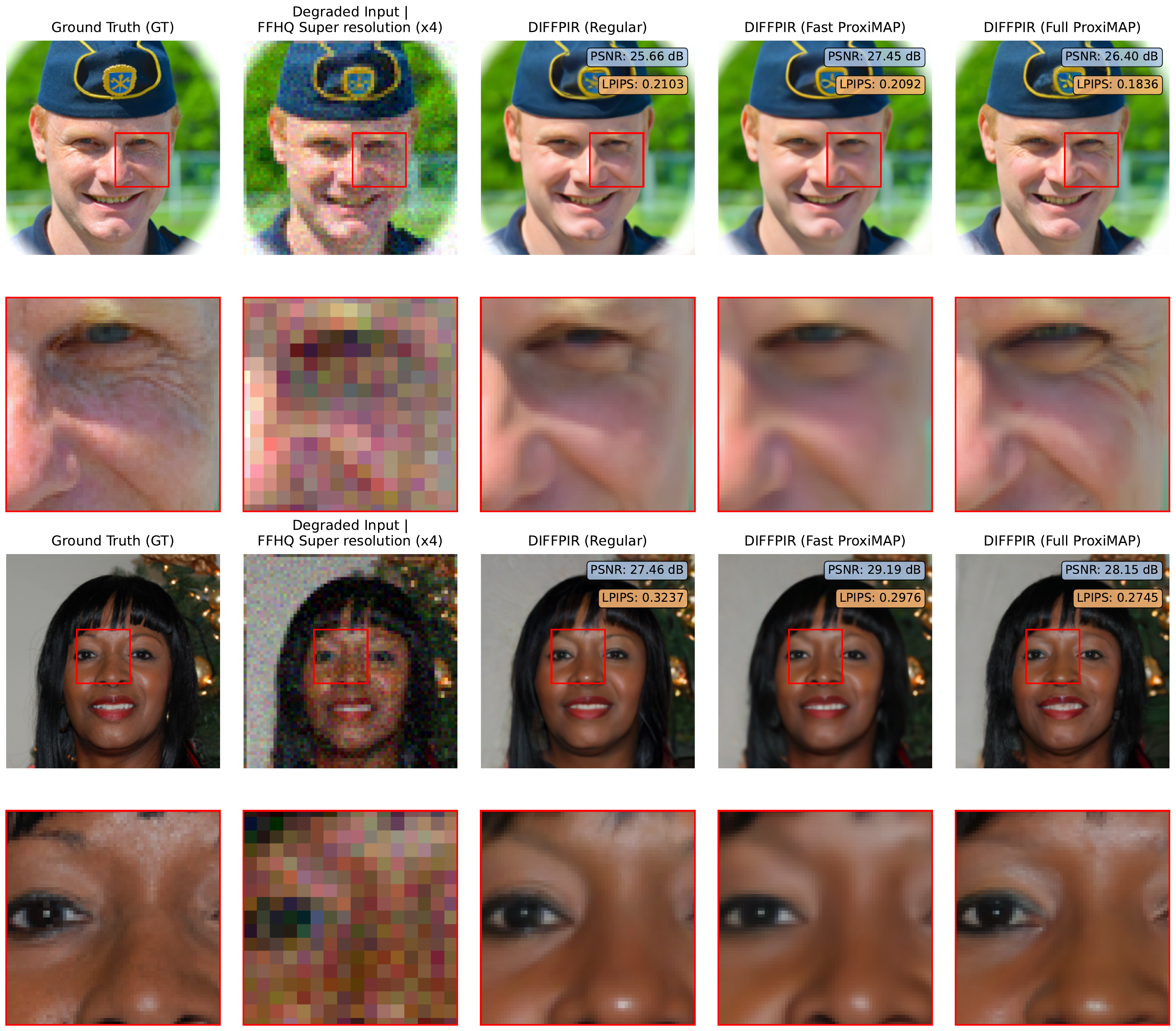}
    \caption{Super resolution using \diffpir/. Here the full \trueprox/ variant seems to work better than the others (as predicted by \cref{fig:switch_steps_diffpir})}
    \label{fig:diffpir_sr}
\end{figure}

\paragraph{Failure cases and how \texttt{Fast} \trueprox/ resolves them}\label{app:failure}

The hyperparameter search did not always converge to good configurations within the allotted budget for \dpir/+\trueprox/. This issue occurred for the super-resolution task on FFHQ, where \cref{fig:dpir-sr-fail} shows high-frequency artifacts along image edges in a few images produced by \dpir/+\trueprox/. These artifacts have a characteristic appearance reminiscent of the early stages of the cartoon effect produced by the iteration of \cref{eq:original-algo} (\cref{sec:diagnosis}). Fast \trueprox/ resolves this issue cleanly, as visible in the same figure: by relying on the standard MMSE denoiser for the early iterations, the hybrid variant avoids entering the regime where score errors generate artifacts, while still benefiting from \trueprox/'s sharpening in the late iterations. This is consistent with the noise-matching design analysis of \cref{sec:proximap-design}.

\begin{figure}
    \centering
    \includegraphics[width=\linewidth]{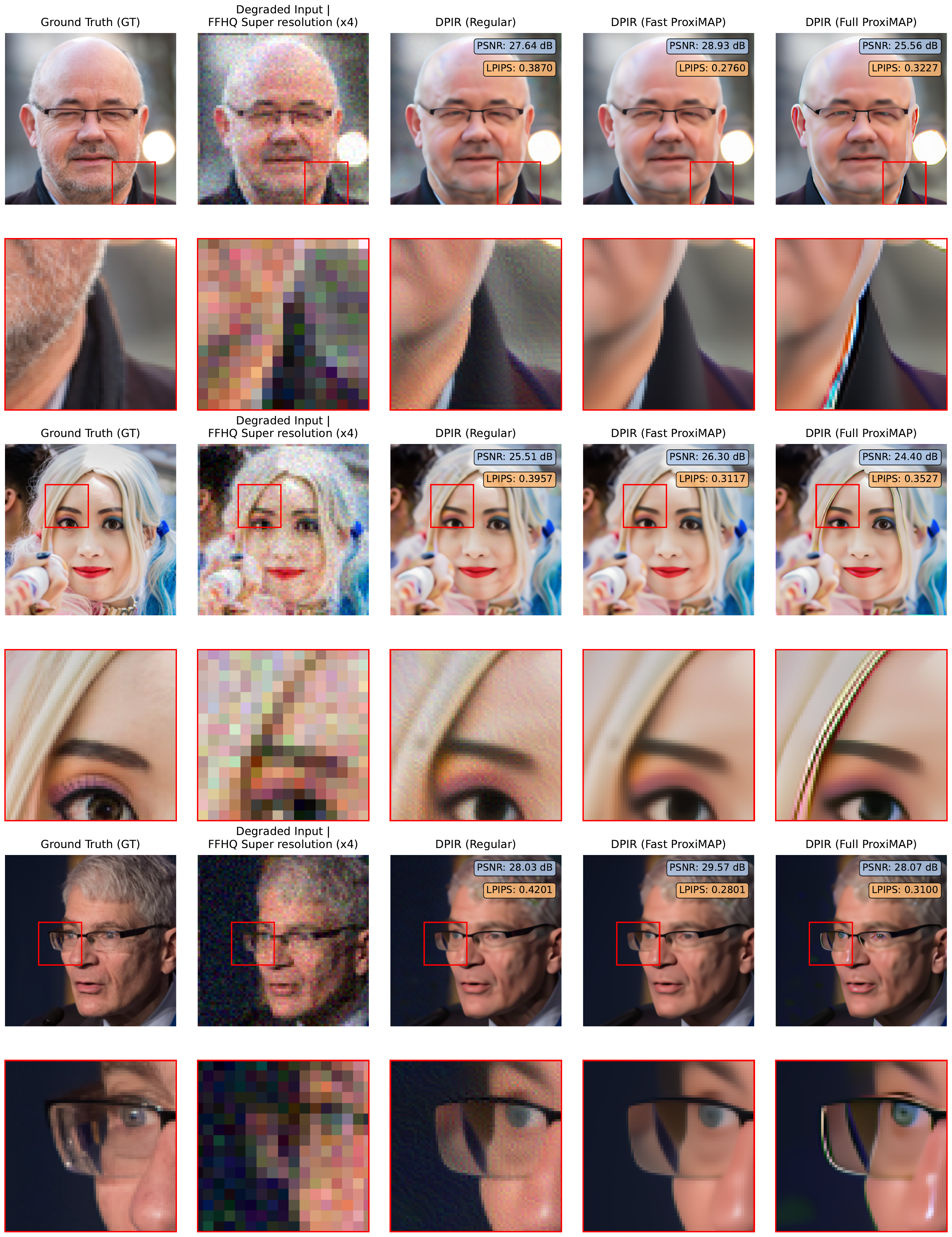}
    \caption{Failure cases for \trueprox/ in \dpir/ on the super-resolution task. High-frequency artifacts appear along the edges (right column), but are removed when we use the \texttt{Fast} \trueprox/ variant.}
    \label{fig:dpir-sr-fail}
\end{figure}

%% file: sec/theoryvpractice.tex
\cref{sec:diagnosis} established two facts that deserve unpacking. First, MAP-targeting with the schedule of~\cite{pesme2025map} fails on diffusion models trained on natural images, even though the same schedule succeeds with the exact score of a Gaussian mixture model. Second, this failure is driven by the implicit biases of diffusion model training displacing the modes of the learned distribution away from those of the data distribution. We discuss here what this means for MAP estimation under learned priors more broadly, and the precise sense in which \trueprox/ acts as principled early stopping.

\paragraph{The status of MAP under learned priors.}
The diagnosis is a statement about the \emph{learned} posterior, not about MAP estimation itself. The conditional log-likelihood values reported in \cref{fig:cartoons} are unambiguous: cartoon outputs are the legitimate maximizers of $\log p_\theta(x \mid y)$ where $p_\theta$ is the distribution implied by the learned score. They are also visually unrealistic. The reconciliation is that $p_\theta$ is not the data distribution. The implicit biases that allow diffusion models to generalize beyond the training set~\cite{kambganguli,scarvelis25closedform,yoon2023diffusion} also smooth and reshape the high-density regions in ways that do not preserve typicality. As a corollary, the MAP of the data distribution---which we cannot access---and the MAP of the learned distribution---which we can in principle compute---are different objects, and targeting the latter is not what one wants. Our results suggest a tension between generalization and MAP-faithfulness in diffusion-based priors: \cref{app:memorizing} confirms this from the other direction, showing that a diffusion model trained in the memorizing regime~\cite{biroli2024dynamical}---where the implicit smoothing is suppressed---produces artifact-free MAP estimates from~\eqref{eq:original-algo}.

\paragraph{Connections and outlook.}
The cartoon effect we exploit here was studied in its own right by~\cite{karczewski25diffusion} as a property of high-density regions of pretrained diffusion models. Our contribution is methodological: identifying it as the operative obstacle to PnP-style MAP estimation, and showing that the noise-matching constraint avoids it.

A clear caveat is that \trueprox/ does not converge to a well-defined fixed point. Its sharpness and faithfulness come from stopping in the regime where the learned score remains reliable, which is a property of the schedule rather than of any limit of the iteration. Whether a notion of MAP can be formulated for learned image priors that is both well-defined and aligned with perceptual quality remains open---perhaps via priors trained explicitly to align their high-density regions with the data manifold, or via posterior objectives less sensitive to the implicit smoothing of standard diffusion training. In the meantime, our results indicate that the most useful design lever is the iterate's relationship to the regime where the learned score is reliable: by managing this relationship explicitly, schedule-based methods can extract sharp, faithful reconstructions from existing diffusion models without requiring any change to how those models are trained.